\newtheorem{theorem}{Theorem}
\newtheorem{proof}{Proof}
\useunder{\uline}{\ul}{}
\author{Anonymous Authors}
\title{Orthogonal Graph Neural Networks }
\author{

    Kai Guo \textsuperscript{\rm 1},
    Kaixiong Zhou \textsuperscript{\rm 2},
    Xia Hu \textsuperscript{\rm 2},
    Yu Li \textsuperscript{\rm 3},
    Yi Chang \textsuperscript{\rm 1},
    Xin Wang \textsuperscript{\rm 1}\thanks{Corresponding author: xinwang@jlu.edu.cn}
    \\
}
\begin{document}

\maketitle

\begin{abstract}
Graph neural networks (GNNs) have received tremendous attention due to their superiority in learning node representations. These models rely on message passing and feature transformation functions to encode the structural and feature information from neighbors. However, stacking more convolutional layers significantly decreases the performance of GNNs. Most recent studies attribute this limitation to the over-smoothing issue, where node embeddings converge to indistinguishable vectors. Through a number of experimental observations, we argue that the main factor degrading the performance is the unstable forward normalization and backward gradient resulted from the improper design of the feature transformation, especially for shallow GNNs where the over-smoothing has not happened. Therefore, we propose a novel orthogonal feature transformation, named Ortho-GConv, which could generally augment the existing GNN backbones to stabilize the model training and improve the model's generalization performance. Specifically, we maintain the orthogonality of the feature transformation comprehensively from three perspectives, namely hybrid weight initialization, orthogonal transformation, and orthogonal regularization. 
By equipping the existing GNNs (e.g. GCN, JKNet, GCNII) with Ortho-GConv, we demonstrate the generality of the orthogonal feature transformation to enable stable training, and show its effectiveness for node and graph classification tasks.
\end{abstract}

\section{Introduction}
Graph neural networks (GNNs)~\cite{kipf2016semi} and their variants have been widely applied to analyze graph-structured data, such as social networks~\cite{tian2019inferring,zhou2019auto} and molecular networks~\cite{DBLP:conf/ijcai/ZhaoLHLZ21, hao2020asgn,zhou2020multi}. Based upon the input node features and graph topology, GNNs apply neighbor aggregation and feature transformation schemes to update the representation of each node recursively. While the neighbor aggregation passes neighborhood messages along the adjacency edges, the feature transformation aims to project node embedding to improve models' learning ability.
Despite their superior effectiveness, a key limitation of GNNs is that their performances would decrease significantly with layer stacking. Most of the previous studies ~\cite{chen2020measuring, li2018deeper, oono2019graph} attribute this limitation to the over-smoothing issue, which indicates that the node representations become indistinguishable due to the recursive neighbor aggregation upon the graph structure. A number of models have been recently proposed to alleviate the over-smoothing issue, including skip connection~\cite{chen2020simple,chen2020revisiting,Li_2019_ICCV} and graph augmentation~\cite{rong2019dropedge}. Their main ideas are to avoid the overwhelming amount of neighborhood information, and strengthen the specific node features of themselves at each graph convolutional layer.

In contrast to the previous extensive studies of over-smoothing for the extremely deep GNNs, we transfer the research attendance to explore the primary factor compromising the performances of shallow GNNs. Our research is motivated by an inconsistent observation in Figure~\ref{fig: analysis}.(d): the node classification accuracy of GNNs drops rapidly once the model depth is slightly enhanced (e.g., up to $8$ layers), where the over-smoothing status should be far from being reached. By simply removing the feature transformation module from GNNs, it is further observed that GNNs surprisingly perform steadily even with tens of graph convolutional layers. This motivates us to pose the following research question: does the stable feature transformation play the dominating role in affecting the model performance of shallow GNNs?

\textbf{Forward and backward signaling analysis.} To answer this question, we first systematically analyze the steadiness of feature transformation from both the forward inference and backward gradient directions. We apply two corresponding steadiness metrics: one measuring the 
\textcolor{black}{signal magnifications
}
of forward node embeddings and the other evaluating the norms of backward gradients. As shown in Figure~\ref{fig: analysis}.(a) and (b), it is empirically demonstrated that vanilla GNNs suffer from the forward embedding explosion and backward gradient vanishing. While the forward explosion greatly shifts the internal embedding distributions among layers to make the model training inefficient~\cite{ioffe2015batch}, the gradient vanishing hampers the tuning of feature transformation module to adapt the downstream tasks. Therefore, we \textcolor{black}{conclude and argue that} the vanilla feature transformation damages the steady model signaling at both the forward and backward directions, which in turn degrades the performances especially for shallow GNNs. 

\textbf{Orthogonal graph convolutions.} To overcome the unstable training, we propose orthogonal graph convolutions to ensure the orthogonality of feature transformation. The orthogonal weight \cite{DBLP:conf/iclr/TrockmanK21,wang2020orthogonal,vorontsov2017orthogonality} has been explored in convolutional and recurrent neural networks (CNNs and RNNs) to maintain the forward activation norm and to avoid the gradient vanishing, which can accelerate the training and improve adversarial robustness. To tailor to the graph data, we optimize the orthogonality of feature transformation from three perspectives: (i) a hybrid weight initialization to possess the trade-off between graph representation learning ability and the model's orthogonality; (ii) an orthogonal weight transformation to ensure the orthogonality in the forward inference; and (iii) an orthogonal regularization to constrain orthogonality during the backward update. The contributions are listed in the following:

\begin{itemize}
\item We propose two metrics to measure the steadiness of forward inference as well as backward gradient, and provide the systematic analysis to theoretically and empirically study the influences of unstable feature transformation on shallow GNNs. 

\item We propose the orthogonal graph convolutions, named  Ortho-GConv, to achieve the orthogonality of feature transformation and stabilize the forward and backward signaling in GNNs.

\item We conduct the comprehensive experiments on both the node and graph classification tasks to demonstrate the general effectiveness of our Ortho-GConv on various GNN backbones, including GCN, JKNet and GCNII. 

\end{itemize}

\section{Related work}
\textbf{GNNs.} 
~\cite{bruna2013spectral} is the first remarkable research on GNNs, which develops graph convolution based on spectral graph theory for graph application tasks. Later, a series of GNN variants are proposed by~\cite{kipf2016semi,defferrard2016convolutional,henaff2015deep,li2018adaptive,levie2018cayleynets}. Although these models achieve the better performance with two layers, when stacking more layers hampers their performance. Recently, several studies argue that stacking more layers causes the over-smoothing issue. These methods, such as APPNP ~\cite{DBLP:conf/iclr/KlicperaBG19}, JKNet~\cite{xu2018representation}, DropEdge ~\cite{rong2019dropedge} and GCNII~\cite{chen2020simple}, are proposed to solve the over-smoothing issue. However, \cite{liu2020towards} claim that the over-smoothing issue only happens when node representations propagate repeatedly for a large number of iterations, especially for sparse graphs. Therefore, a few propagation iterations in the GNN models are insufficient for over-smoothing to occur. On the contrary, we argue that unstable forward and backward signaling results in the poor performance of shallow GNN models.

\paragraph{Orthogonal initialization and transformation.} The advantages of orthogonal weight initialization, i.e., ensuring the signals propagate through deep networks and preserving gradient norms, are explained by~\cite{DBLP:journals/corr/SaxeMG13}. Recently, there have been several studies exploring orthogonal initialization in CNNs. \cite{xiao2018dynamical} propose a method for orthogonal convolutions and demonstrated, which allows the CNN model to effectively explore large receptive fields without batch normalization or residual connection.
\cite{DBLP:conf/iclr/TrockmanK21} propose the Calay transformation to constrain the parameters in convolutional layers to make them orthogonal.
\cite{DBLP:conf/cvpr/Huang00WYL020} propose an efficient and stable orthogonalization method to learn a layer-wise orthogonal weight matrix in CNNs. Furthermore, the gradient norm preserving properties can also benefit from remembering long-term dependencies in RNNs. 
\cite{vorontsov2017orthogonality} propose the constrained transformation matrices to make orthogonal and address the gradient vanishing and exploding in RNNs.

\begin{figure*}
    \centering
    \includegraphics[scale=0.135]{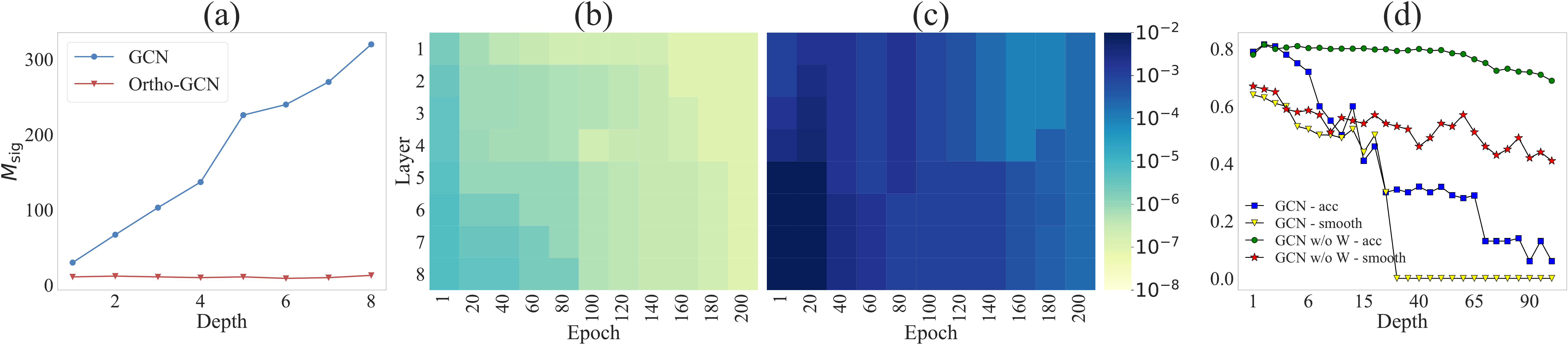}
    \caption{\textbf{(a)} The signal magnifications $M_{\mathrm{sig}}$ for GCNs with and without the orthogonal graph convolutions. \textbf{(b)} The gradient norms of vanilla GCN. \textbf{(c)} The  gradient norms of GCN augmented with orthogonal graph convolutions. \textbf{(d)} Test accuracy and node embedding smoothness under different model depths.}
    \label{fig: analysis}
\end{figure*}

\section{Forward and Backward Signaling Analysis}
In this section, we first introduce the notations and GNN model. We then analyze the forward and backward signaling process, and validate our argument with empirical studies. 

\subsection{Notations and GNNs} We denote matrices with boldface capital letters (e.g. $\bm{X}$), vectors with boldface lowercase letters~(e.g., $\bm{x}$) and scalars with lowercase alphabets~(e.g., $x$). An undirected graph is represented by $G=(\mathcal{V}, \mathcal{E})$, where $\mathcal{V} = \{v_i\}$ and $\mathcal{E} = \{(v_{i}, v_{j})\}$ denote the node and edge sets, respectively. Let $\bm{X} \in \mathbb{R}^{n \times d}$ denote the node feature matrix, where the $i$-th row is the corresponding $d$-dimensional feature vector of node $v_i$. The adjacency matrix is defined as $\boldsymbol{A} \in \mathbb{R}^{n \times n}$, which associates each edge $(v_{i}, v_{j})$ with its element $A_{i j}$; and $\bm{D}$ is the degree matrix. Let $ \tilde{\bm{A}}:= \bm{A} + \bm{I}_n$ and $\tilde{\bm{D}} := \bm{D} + \bm{I}_n$ be the adjacency and degree matrices of the graph augmented with self-loops. The normalized adjacency matrix is given by $\hat{\bm{A}} := \tilde{\bm{D}}^{-\frac{1}{2}}\tilde{\bm{A}}\tilde{\bm{D}}^{-\frac{1}{2}}$, which is widely used for spatial neighborhood aggregation in GNN models.

We use the graph convolutional networks (GCNs)~\cite{kipf2016semi} as a typical example, to illustrate how the traditional GNNs conduct node representation learning and to explain the model stability problem in the following subsection. The forward inference at the $l$-th layer of GCNs is formally defined as:

\begin{equation}
\bm{H}^{(l)}=\sigma(\hat{\bm{A}} \bm{H}^{(l-1)} \bm{W}^{(l)}),
\label{eq:GCN}
\end{equation}
where $\bm{H}^{(l)}$ denotes the node embedding matrix at the $l$-th layer; $\bm{H}^{(0)}$ is given by $\bm{X}$; 
$\sigma(\cdot)$ is the nonlinear activation function, such as $\mathrm{ReLU}$; $\boldsymbol{W}^{(l)} \in \mathbb{R}^{d \times d}$ is the linear transformation matrix. It is observed that graph convolutions consist of two key steps: the spatial neighbor aggregation based upon matrix $\hat{\bm{A}}$ and the feature transformation with matrix $\bm{W}^{(l)}$. Let $L$ denote the model depth. The output embedding of node $v_i$, i.e., $\bm{h}^{(L)}_i$ at the $i$-th row of $\bm{H}^{(L)}$, could be used to conduct the node or graph classification task. 

\subsection{Forward and Backward Steadiness Analysis}
For the spatial GNN models based on Eq.~\eqref{eq:GCN}, it is commonly known that stacking more layers tends to degrade the downstream task performance (e.g., node classification accuracy) significantly. Such phenomenon is usually attributed to the over-smoothing issue~\cite{li2018deeper}, which states that the node embeddings become similar due to the recursive neighbor aggregation. By simplifying the feature transformation and non-linear activation, many theoretical studies have been delivered to explain the recursive neighbor aggregation with low-passing filtering or Markov chains, which proves the node embeddings will converge to a unique equilibrium when the model depth keeps increasing~\cite{liu2020towards, nt2019revisiting}. Following this analysis, a variety of heuristic models are proposed to improve the neighbor aggregation and relieve the over-smoothing issue. For example, the skip connection combines with node embeddings from the previous layer to preserve the initial node features~\cite{chen2020simple}, and the edge dropping randomly removes edges to avoid the overwhelming amount of neighborhood information~\cite{rong2019dropedge}. 

We rethink the performance degrading problem of GNNs with an empirical observation -- vanilla GNNs even with a few layers suffer from the similar issue with the deep GNNs. Specifically, we investigate GCNs with the different model depths in terms of node classification accuracy and graph smoothness. The graph smoothness is defined as the average distance of node pairs~\cite{liu2020towards}, i.e,  $D = \frac{1}{|\mathcal{V}|^2}\sum_{v_i, v_j\in \mathcal{V}} ||\bm{h}^{(L)}_i - \bm{h}^{(L)}_j||$. As shown in Figure~\ref{fig: analysis}.(d), on dataset Cora~\cite{sen2008collective}, the accuracy and graph smoothness drop quickly when model depth $L$ is slightly enhanced to $8$. Even worse, metric $D$ approximates to zero once $L>20$, where GCNs fall into the random prediction. Such observations are in contrast to the previous theoretical analysis on the over-smoothing issue -- node embeddings become indistinguishable only if $L\rightarrow\infty$. By simply removing the feature transformation module in GCNs, it is surprising to observe that the accuracy and graph smoothness are maintained steadily until $L=100$. This makes us question the application of over-smoothing theory to explain the performance deterioration for shallow GNNs, since the node embeddings averaged by the standalone neighbor aggregation are well separated when $L$ is small. Thus, we shift the research attendance to the feature transformation has been ignored before, and argue that the unstable feature transformation is the primary factor to compromise GNNs. The stability of feature transformation is defined by both the forward inference steadiness and the backward gradient steadiness, which are introduced in the following.

\paragraph{Forward inference steadiness.} Keeping the steady forward inference is a ubiquitous technique to constrain the magnitudes of propagated signals and train deep neural networks stably~\cite{DBLP:conf/iclr/TrockmanK21}. 
Recalling the vanilla graph convolutions in Eq.~\eqref{eq:GCN}, 
the feature transformation may amplify the magnitudes of node embeddings without the proper constraint on matrix $\bm{W}^{(l)}$. Such magnitude amplification accumulates exponentially with layers~\cite{xie2017all}, and gives rise to the indefiniteness and randomness of forward node embeddings. The dramatical shift of internal signal distributions among the different layers could prevent the underlying model from training efficiently~\cite{ioffe2015batch}. To quantify the magnitude amplification of node embeddings across the whole GNN model, we define the signal magnification as:
\begin{equation}
    \label{equ:forward_ratio}
    M_{\mathrm{sig}} = \frac{1}{|\mathcal{V}|}\sum_{v_i\in \mathcal{V}} \frac{||\bm{h}^{(L)}_i||_2}{ ||\bm{h}^{(0)}_i||_2}.
\end{equation}
Specifically, metric $M_{\mathrm{sig}}$ averages the ratios of node embedding norms from the last layer over those at the initial layer. A larger value of $M_{\mathrm{sig}}$ indicates that the node embedding magnitudes are amplified excessively during the forward inference. 
Based on the common assumption that the initial data are whitened and decorrelated, the ideal $M_{\mathrm{sig}}$ should be $1$ to ensure the identical embedding magnitudes and signal distributions among layers.

\paragraph{Backward gradient steadiness.}
Besides the forward inference, an alternative direction to stabilize the training process is to maintain the backward gradient steadiness. 
GNNs are trained by gradient descent with back propagation to update weight $\bm{W}^{(l)}$ involved in the feature transformation.
While the existing studies focus on the forward message passing in GNNs, understanding backward gradient trajectory to optimize the feature transformation has remained limited. 
Therefore, we make the initial step to analyze the gradients in terms of parameter $\bm{W}^{(l)}$. To facilitate the gradient analysis, we simplify the non-linear activation function in Eq.~\eqref{eq:GCN} and obtain $\bm{H}^{(l)}=\hat{\bm{A}} \bm{H}^{(l-1)} \bm{W}^{(l)}$. It has been widely recognized that GNNs with and without non-linear have comparable node classification performance and learning curves~\cite{wu2019simplifying}. The model simplification helps us understand the gradient dynamics intuitively. 

\begin{theorem}\label{theorem_1}
Given the linear GNN model with $L$ layers and the specific training loss $\mathcal{L}$, the gradient with respect to parameter $\bm{W}^{(l)}$ at the $l$-th layer is given by:
\begin{equation}
\begin{aligned}
\label{eq: gradient}
 \frac{\partial \mathcal{L}}{\partial \bm{W}^{(l)}} 
 & =  (\bm{H}^{(l-1)})^\top(\hat{\bm{A}}^\top)^{L-l+1}\frac{\partial \mathcal{L}}{\partial \bm{H}^{(L)}} \\
 &\quad \cdot (\bm{W}^{(l+1)}\cdots\bm{W}^{(L)})^\top \in \mathbb{R}^{d\times d}.
\end{aligned}
\end{equation}
\end{theorem}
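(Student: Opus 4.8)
The plan is to unroll the linear recursion until $\bm{W}^{(l)}$ appears as a single isolated factor, and then apply one matrix chain-rule step. Starting from $\bm{H}^{(k)}=\hat{\bm{A}}\bm{H}^{(k-1)}\bm{W}^{(k)}$ and substituting repeatedly for $k=l,l+1,\dots,L$, associativity of matrix multiplication (the propagation matrix $\hat{\bm{A}}$ acts only on the node index and therefore passes through every $\bm{W}^{(k)}$) yields the closed form
\[
\bm{H}^{(L)}=\hat{\bm{A}}^{\,L-l+1}\,\bm{H}^{(l-1)}\,\bm{W}^{(l)}\bm{W}^{(l+1)}\cdots\bm{W}^{(L)},
\]
which I would verify by a one-line induction on the number of unrolled layers (base case: the definition of $\bm{H}^{(l)}$; inductive step: left-multiply by $\hat{\bm{A}}$, right-multiply by $\bm{W}^{(k+1)}$). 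Viewed as a function of $\bm{W}^{(l)}$ alone, the blocks $\bm{M}:=\hat{\bm{A}}^{\,L-l+1}\bm{H}^{(l-1)}$ and $\bm{N}:=\bm{W}^{(l+1)}\cdots\bm{W}^{(L)}$ are constant, so $\bm{H}^{(L)}$ depends on $\bm{W}^{(l)}$ only through the linear map $\bm{W}^{(l)}\mapsto\bm{M}\bm{W}^{(l)}\bm{N}$.

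Next I would invoke the standard Jacobian identity: if $\mathcal{L}$ is a scalar function of $\bm{Y}=\bm{M}\bm{W}\bm{N}$, then $\partial\mathcal{L}/\partial\bm{W}=\bm{M}^\top(\partial\mathcal{L}/\partial\bm{Y})\bm{N}^\top$. This follows in two lines by writing $\mathrm{d}\mathcal{L}=\mathrm{tr}\big((\partial\mathcal{L}/\partial\bm{Y})^\top\,\mathrm{d}\bm{Y}\big)=\mathrm{tr}\big((\partial\mathcal{L}/\partial\bm{Y})^\top\bm{M}\,\mathrm{d}\bm{W}\,\bm{N}\big)$ and reading off the coefficient of $\mathrm{d}\bm{W}$ via cyclicity of the trace. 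Substituting $\bm{M}$, $\bm{N}$ and $\bm{Y}=\bm{H}^{(L)}$, and distributing the transpose as $(\hat{\bm{A}}^{\,L-l+1}\bm{H}^{(l-1)})^\top=(\bm{H}^{(l-1)})^\top(\hat{\bm{A}}^\top)^{L-l+1}$, gives exactly Eq.~\eqref{eq: gradient}. The transpose on $\hat{\bm{A}}$ is retained to match the stated form and to cover non-symmetric propagation operators, even though here $\hat{\bm{A}}^\top=\hat{\bm{A}}$.

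As a cross-check I would also re-derive the identity by the layer-wise backprop recursion $\partial\mathcal{L}/\partial\bm{H}^{(k-1)}=\hat{\bm{A}}^\top(\partial\mathcal{L}/\partial\bm{H}^{(k)})(\bm{W}^{(k)})^\top$ together with $\partial\mathcal{L}/\partial\bm{W}^{(l)}=(\hat{\bm{A}}\bm{H}^{(l-1)})^\top(\partial\mathcal{L}/\partial\bm{H}^{(l)})$; unrolling the first from $k=L$ down to $k=l$ and inserting it into the second reproduces the claimed product, with $(\bm{W}^{(L)})^\top\cdots(\bm{W}^{(l+1)})^\top$ reassembling into $(\bm{W}^{(l+1)}\cdots\bm{W}^{(L)})^\top$ and the $\hat{\bm{A}}^\top$ factors accumulating to the exponent $L-l+1$. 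There is no analytic difficulty in this argument; the only real obstacle is bookkeeping — tracking the left-to-right order of the weight product and the placement of the transposes — so I would fix the matrix-derivative convention at the outset and carry it consistently through the induction.
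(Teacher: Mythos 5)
Your proposal is correct, and its second, ``cross-check'' derivation is exactly the paper's proof: the paper first writes the local gradient $\frac{\partial \mathcal{L}}{\partial \bm{W}^{(l)}}=(\hat{\bm{A}}\bm{H}^{(l-1)})^\top\frac{\partial \mathcal{L}}{\partial \bm{H}^{(l)}}$ and then unrolls the backward recursion $\frac{\partial \mathcal{L}}{\partial \bm{H}^{(l)}}=\hat{\bm{A}}^\top\frac{\partial \mathcal{L}}{\partial \bm{H}^{(l+1)}}{\bm{W}^{(l+1)}}^\top$ down from layer $L$, which is precisely your recursion with the same bookkeeping of $(\hat{\bm{A}}^\top)^{L-l}$ and the reversed transposed weight product. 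Your primary route differs in presentation: you first unroll the forward pass into the closed form $\bm{H}^{(L)}=\hat{\bm{A}}^{L-l+1}\bm{H}^{(l-1)}\bm{W}^{(l)}\bm{W}^{(l+1)}\cdots\bm{W}^{(L)}$, isolate $\bm{W}^{(l)}$ inside a single affine map $\bm{W}\mapsto\bm{M}\bm{W}\bm{N}$, and apply the trace-differential identity $\frac{\partial\mathcal{L}}{\partial\bm{W}}=\bm{M}^\top\frac{\partial\mathcal{L}}{\partial\bm{Y}}\bm{N}^\top$ once; your exponent count and weight ordering check out. What this buys you is rigor and economy: it avoids the paper's mildly abusive step of writing $\frac{\partial\bm{H}^{(l)}}{\partial\bm{W}^{(l)}}\frac{\partial\mathcal{L}}{\partial\bm{H}^{(l)}}$ as a matrix product (really a four-index tensor contraction, which the trace/differential calculus handles cleanly), and it compresses the argument to one chain-rule application. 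What the paper's layer-by-layer backward recursion buys is interpretability: it exhibits the gradient as being repeatedly aggregated by $\hat{\bm{A}}^\top$ and transformed by each ${\bm{W}^{(k)}}^\top$, which is the structure the authors then invoke to argue that the gradient is ``smoothed'' through the posterior $L-l$ layers and may vanish at early layers. Either write-up would be acceptable; since you carry both, there is no gap.
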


We give the detailed proof in Appendix.1. $\mathcal{L}$ could be represented by the cross-entropy loss in the node or graph classification task. 
According to Eq.~\eqref{eq: gradient}, it is observed that the backward gradients are backpropagated through the neighborhood aggregation and feature transformation, which are similar to the forward inference process. To update parameter $\bm{W}^{(l)}$ at layer $l$, the initial gradient $\frac{\partial \mathcal{L}}{\partial \bm{H}^{(L)}}$ is smoothed through the posterior
$L-l$ layers, and transformed with $(\hat{\bm{A}} \bm{H}^{(l-1)})^\top$. Since parameters $\{\bm{W}^{(l+1)},\cdots,\bm{W}^{(L)}\}$ are penalized during training, such smoothing and transformation will make most of the gradient entries approximate to zeros. In other words, the backward gradients may be vanishing at the initial few layers, which hampers the effective training of GNNs. To study the influence of gradient vanishing, we propose to apply gradient norms, i.e., $||\frac{\partial \mathcal{L}}{\partial \bm{W}^{(l)}}||_F$, $l=1, \cdots, L$, to quantify the gradient steadiness. The appropriate strengths of gradient norms would be preferred to stabilize the model training.

\paragraph{Illustration of model steadiness metrics.}
To validate our argument that the unstable feature transformation is the primary factor compromising the performances of shallow models, we empirically analyze the two proposed steadiness metrics, i.e, the forward signal magnification $M_{\mathrm{sig}}$ and the backward gradient norm $||\frac{\partial \mathcal{L}}{\partial \bm{W}^{(l)}}||_F$. Specifically, we develop a series of GCNs with different depths $L$, and show their signal magnifications in  Figure~\ref{fig: analysis}.(a). It is observed that metric $M_{\mathrm{sig}}$ rises quickly with depth $L$, which indicates the magnitudes of forward node embeddings are amplified exponentially across the model. The resulting butterfly effect of internal distribution shift prevents the effective training, which help explain the performance degrading of shallow GNNs. To illustrate the gradient dynamics from the final to the initial layer, we plot their gradient norms for an $8$-layer GNN in  Figure~\ref{fig: analysis}.(b). Being consistent with our theoretical analysis, $||\frac{\partial \mathcal{L}}{\partial \bm{W}^{(l)}}||_F$ shows decreasing tendency during the backpropagating process at the initial training phase. After hundreds of epochs, all gradients at the different layers were vanishing, which stops GNNs moving to the global minimum of the loss landscape. It should be noted that the unstable forward and backward signaling appear in shallow GCN with $8$ layers, while GCNs without the feature transformation modules deliver steady performances until $L=100$ as illustrated in  Figure~\ref{fig: analysis}.(d). Therefore, we could conclude that the unstable feature transformation is responsible for the performance deterioration of shallow GNNs.

\section{Orthogonal Graph Convolutions}
To overcome the issues of unstable forward inference and backward gradients, we explore the use of orthogonality on the feature transformation. Although many other methods have been applied to constrain the forward direction in GNNs, such as pair or group normalization~\cite{zhao2019pairnorm, zhou2020towards}, they fail to guarantee the backward steadiness due to the complexity variations of signals~\cite{saxe2013exact}. In this section, we first review the orthogonal matrix and its theoretical properties in stabilizing forward and backward signaling. We then discuss the challenges of applying orthogonal constraint in GNNs, and optimize the orthogonal graph convolutions to tailor to the graph data.

\subsection{Orthogonality}

A matrix $ \bm{W} \in \mathbb{R}^{d \times d}$ is orthogonal if $ \bm{W}^\top \bm{W} = \bm{W} \bm{W}^\top = \bm{I}$. Encouraging the orthogonality in deep neural networks has proven to yield several benefits, such as stable and quick training~\cite{xiao2018dynamical}, better generalization~\cite{bansal2018can}, and improved robustness against the adversarial attack~\cite{tsuzuku2018lipschitz}. In GNNs, we concern the preferred properties of orthogonal feature transformation to stabilize the signaling processes at both the forward and backward directions. 
We thus ignore the neighbor aggregation and non-linear activation at each layer to simplify the theoretical analysis, and consider their impacts empirically in the following model design. Mathematically, for the $l$-th layer, the graph convolutions could be simplified as $\bm{H}^{(l)} = \hat{\bm{H}}^{(l)}\bm{W}^{(l)}$, where $\hat{\bm{H}}^{(l)} = \hat{\bm{A}} \bm{H}^{(l-1)}$ denotes the node embeddings after neighbor aggregation. The following theorem, proved in~\cite{huang2018orthogonal}, shows that orthogonal weight $\bm{W}^{(l)}$ could preserve the norms of forward embeddings and backward gradients for signals passing through the feature transformation module.

\begin{theorem}\label{theorem_2}
Let $\bm{W}^{(l)} \in \mathbb{R}^{d \times d}$ denote the orthogonal matrix adopted by the feature transformation at the $l$th layer. Let $\hat{\bm{h}}^{(l)}$ and $\bm{h}^{(l)}$ denote the node embeddings, which are given by specific rows in matrices $\hat{\bm{H}}^{(l)}$ and $\bm{H}^{(l)}$, respectively. (1) Assume the mean of $\hat{\bm{h}}^{(l)}$ is $\mathbb{E}_{\hat{\bm{h}}^{(l)}}[\hat{\bm{h}}^{(l)}]=\mathbf{0}$, and the
covariance matrix of $\hat{\bm{h}}^{(l)}$ is $\operatorname{cov}(\hat{\bm{h}}^{(l)})=\sigma^{2} \bm{I}$. Then $\mathbb{E}_{\bm{h}^{(l)}}[\bm{h}^{(l)}]=\mathbf{0}$, and $\operatorname{cov}(\bm{h}^{(l)})=\sigma^{2} \bm{I}$. (2) We have $\|\bm{H}^{(l)}\|_F=\|\hat{\bm{H}}^{(l)}\|_F$.
(3) Given the back-propagated gradient $\frac{\partial \mathcal{L}}{\partial \bm{H}^{(l)}}$, we have $\left\|\frac{\partial \mathcal{L}}{\partial \bm{H}^{(l)}}\right\|_F=\left\|\frac{\partial \mathcal{L}}{\partial \hat{\bm{H}}^{(l)}}\right\|_F$.
\end{theorem}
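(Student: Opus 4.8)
The plan is to lean entirely on the single algebraic fact that $\bm{W}^{(l)}$ is orthogonal, i.e.\ $(\bm{W}^{(l)})^\top\bm{W}^{(l)} = \bm{W}^{(l)}(\bm{W}^{(l)})^\top = \bm{I}$, together with the simplified layer map $\bm{H}^{(l)} = \hat{\bm{H}}^{(l)}\bm{W}^{(l)}$, and then push each of the three claims through with elementary manipulations of expectations and traces. First I would fix the convention that a row of $\hat{\bm{H}}^{(l)}$, viewed as a column vector $\hat{\bm{h}}^{(l)}$, maps to the corresponding row of $\bm{H}^{(l)}$ via $\bm{h}^{(l)} = (\bm{W}^{(l)})^\top\hat{\bm{h}}^{(l)}$, so that all subsequent steps are just bookkeeping on top of this one identity.

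For part (1), I would apply linearity of expectation to get $\mathbb{E}_{\bm{h}^{(l)}}[\bm{h}^{(l)}] = (\bm{W}^{(l)})^\top\,\mathbb{E}_{\hat{\bm{h}}^{(l)}}[\hat{\bm{h}}^{(l)}] = \mathbf{0}$, and then use the transformation rule for covariance under a linear map, $\operatorname{cov}(\bm{h}^{(l)}) = (\bm{W}^{(l)})^\top\operatorname{cov}(\hat{\bm{h}}^{(l)})\bm{W}^{(l)} = \sigma^{2}(\bm{W}^{(l)})^\top\bm{W}^{(l)} = \sigma^{2}\bm{I}$, where the final equality is exactly orthogonality.

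For part (2), I would write $\|\bm{H}^{(l)}\|_F^2 = \operatorname{tr}\bigl((\bm{W}^{(l)})^\top(\hat{\bm{H}}^{(l)})^\top\hat{\bm{H}}^{(l)}\bm{W}^{(l)}\bigr)$ and then use the cyclic invariance of the trace to bring $\bm{W}^{(l)}(\bm{W}^{(l)})^\top = \bm{I}$ together, which collapses the expression to $\operatorname{tr}\bigl((\hat{\bm{H}}^{(l)})^\top\hat{\bm{H}}^{(l)}\bigr) = \|\hat{\bm{H}}^{(l)}\|_F^2$. For part (3), I would first obtain the backward relation by differentiating $\bm{H}^{(l)} = \hat{\bm{H}}^{(l)}\bm{W}^{(l)}$ via the chain rule, giving $\frac{\partial\mathcal{L}}{\partial\hat{\bm{H}}^{(l)}} = \frac{\partial\mathcal{L}}{\partial\bm{H}^{(l)}}(\bm{W}^{(l)})^\top$, and then repeat the trace computation of part (2) verbatim with $\frac{\partial\mathcal{L}}{\partial\bm{H}^{(l)}}$ in place of $\hat{\bm{H}}^{(l)}$ and $(\bm{W}^{(l)})^\top$ in place of $\bm{W}^{(l)}$, again invoking cyclicity of the trace and orthogonality.

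The only point that needs care — rather than a real obstacle — is keeping the transpose conventions consistent across the row-wise forward map, the matrix form $\bm{H}^{(l)} = \hat{\bm{H}}^{(l)}\bm{W}^{(l)}$, and the induced backpropagation formula; once those are pinned down, all three statements reduce to one use of orthogonality plus the cyclic property of the trace, and no further structure of $\hat{\bm{H}}^{(l)}$, of the data distribution, or of the loss $\mathcal{L}$ is required.
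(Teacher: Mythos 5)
Your proposal is correct and follows essentially the same route as the paper's own proof: push the mean and covariance through the linear orthogonal map for part (1), and cancel $\bm{W}^{(l)}{\bm{W}^{(l)}}^{\top}=\bm{I}$ inside the Frobenius norm for parts (2) and (3), with the backward relation $\frac{\partial\mathcal{L}}{\partial\hat{\bm{H}}^{(l)}}=\frac{\partial\mathcal{L}}{\partial\bm{H}^{(l)}}{\bm{W}^{(l)}}^{\top}$ obtained from the chain rule. If anything, your trace-based formulation of (2) and (3) is slightly more careful than the paper's write-up, which informally identifies $\|\bm{H}^{(l)}\|_F$ with $\sqrt{{\bm{H}^{(l)}}^{\top}\bm{H}^{(l)}}$ without taking the trace, but the underlying argument is identical.
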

We list the detailed proof in Appendix.2. Theorem~\ref{theorem_2} shows the benefits of orthogonal feature transformation to stabilize simplified GNNs: (1) The Frobenius norms of node embeddings $\bm{H}^{(l)}$ and $\hat{\bm{H}}^{(l)}$ are kept to be identical, which helps constrain the embedding magnitudes across model and approximate the desired signal magnification $M_{\mathrm{sig}}$ with value $1$. (2) The norms of backward gradients are maintained when passing through the feature transformation layer. This relieves the gradient vanishing issue as studied in Theorem~\ref{theorem_1}.

\subsection{Orthogonal Graph Convolutions}
To ensure orthogonality, 
most of the prior efforts either insert an extra orthogonal layer to transform matrix $\bm{W}^{(l)}$~\cite{DBLP:conf/iclr/TrockmanK21}, or exploit orthogonal weight initialization to provide model a good start~\cite{xiao2018dynamical}. However, it is non-trivial to directly apply the existing orthogonal methods due to two challenges in GNNs. First, since the node features usually contain the key information for the downstream task, the intuitive orthogonal initialization will accelerate the training process toward local minimum and damage the model's learning ability. Second, even with the strict orthogonality of $\bm{W}^{(l)}$, the impacts brought by the neighbor aggregation and non-linear activation make it failed to preserve the embedding norms of the successive layers. According to Theorem~\ref{theorem_2}, the orthogonality only obtains $\|\bm{H}^{(l)}\|_F=\|\hat{\bm{H}}^{(l)}\|_F$ within the same layer in simplified GNNs, instead of strictly ensuring $\|\bm{H}^{(l)}\|_F=\|\bm{H}^{(l-1)}\|_F$ at the successive layers in non-linear GNNs. To bridge the gaps, we propose the orthogonal graph convolutions, named Ortho-GConv, by optimizing the orthogonality designs comprehensively from three architectural perspectives, including hybrid weight initialization, orthogonal transformation, and orthogonal regularization. The details are introduced in the following. 

\paragraph{Hybrid weight initialization.}

It is widely demonstrated that GNNs tend to overfit on the large and attributed graph data~\cite{kong2020flag}. Although the orthogonal initialization allows training the deep vanilla neural networks efficiently, the quick convergence may iterate to the local optimum and intensifies the overfitting problem. To obtain the trade-off between orthogonality and model's learning ability, we propose the hybrid weight initialization to set weight at the $l$-th layer as follows:

\begin{equation}
\bm{Q}^{(l)} = \beta \bm{P}^{(l)} + (1-\beta) \bm{I} \in \mathbb{R}^{d\times d}.
\end{equation}
$\bm{P}^{(l)}$ is initialized by the traditional random approaches (e.g., Glorot initialization~\cite{glorot2010understanding}), while we adopt the identity initialization~\cite{le2015simple}, the simplest orthogonal method, to obtain the initial orthogonality $\bm{I}$. $\beta$ is a hyperparameter to control the trade-off.

\paragraph{Orthogonal transformation.}
Given initialized weight $\bm{Q}^{(l)}$, we adopt an extra orthogonal transformation layer to transform it and improve the orthogonality before its appliance for the feature transformation. We use Newton's iteration to illustrate our method due to its simplicity~\cite{DBLP:conf/cvpr/Huang00WYL020}. Specifically, the orthogonal transformation based on Newton's iteration is divided into two steps: spectral bounding and orthogonal projection.
First, the spectral bounding normalizes weight $\bm{Q}^{(l)}$ as: $\hat{\bm{Q}}^{(l)} =\frac{\bm{Q}^{(l)}}{\|\bm{Q}^{(l)}\|_{F}}$. 

The orthogonal projection then maps matrix $\hat{\bm{Q}}^{(l)}$ to obtain the orthogonal weight $\bm{W}^{(l)}$.  Mathematically, the orthogonal projection is given by: $\bm{W}^{(l)}=\bm{M}^{-\frac{1}{2}}\hat{\bm{Q}}^{(l)}$, where $\bm{M}=\hat{\bm{Q}}^{(l)} \hat{\bm{Q}}^{(l)T}$ is the covariance matrix. Due to the exponential complexity to compute the square root of covariance matrix, $\bm{M}^{-\frac{1}{2}}$ is instead computed by the Newton's iteration with the following iterative formulas:
\begin{equation}
\left\{\begin{array}{l}
\bm{B}_{0}=\bm{I} \\
\bm{B}_{t}=\frac{1}{2}\left(3 \bm{B}_{t-1}-\bm{B}_{t-1}^{3} \bm{M}\right), \quad t=1,2, \ldots, T
\end{array}\right.
\end{equation}
where $T$ is the number of iterations. Under the condition of $\|\bm{I}-\bm{M}\|_{2}<1$, it has been proved that $\bm{B}_{T}$ will converge to $\bm{M}^{-\frac{1}{2}}$. Therefore, we instead obtain the orthogonal weight via $\bm{W}^{(l)}=\bm{B}_{T} \hat{\bm{Q}}^{(l)}$. Weight $\bm{W}^{(l)}$ is applied for the feature transformation as shown in Eq.~\eqref{eq:GCN}.

\paragraph{Orthogonal regularization.} 
Even accompanied with orthogonal matrix  $\bm{W}^{(l)}$ in the feature transformation, the norms of forward node embeddings still fails to be preserved due to the neighbor aggregation and non-linear activation in GNNs. To be specific, recalling the graph convolutitons in Eq.~\eqref{eq:GCN}, we have:

\begin{equation}
\begin{aligned}
    \|\bm{H}^{(l)}\|_F & =\|\sigma(\hat{\bm{A}} \bm{H}^{(l-1)} \bm{W}^{(l)})\|_F \\
    & \leq \|\hat{\bm{A}} \bm{H}^{(l-1)} \bm{W}^{(l)}\|_F = \|\hat{\bm{A}} \bm{H}^{(l-1)}\|_F\\
    & \leq \|\hat{\bm{A}}\|_F
\|\bm{H}^{(l-1)}\|_F \leq \|\bm{H}^{(l-1)}\|_F.
\end{aligned}
\end{equation}
The first inequality holds since the non-linear activation of ReLU maps the negative entries into zeros. The following equality is obtained by the norm-preserving property of orthogonal weight $\bm{W}^{(l)}$. Since the entries in adjacency matrix $\hat{\bm{A}}$ are normalized within range $[0,1]$, we have $\|\hat{\bm{A}}\|_F \leq 1$ to get the final inequality.
 
Compared to the forward embedding explosion in vanilla GNNs, such norm vanishing will also shift the internal embedding distributions and result in the inefficient training. To maintain the node embedding norms during the forward inference, we propose a simple orthogonal regularization to constrain weight $\bm{W}^{(l)}$ as:

\begin{equation}
\mathcal{L}_{\text {auxiliary }}=\lambda \sum_{l}\left\|\bm{W}^{(l)}{\bm{W}^{(l)}}^{\top}-c^{(l)}\cdot\bm{I}\right\|_F,
\end{equation}
where $\lambda$ is a hyperparameter. $c^{(l)}$ is the trainable scalar to control the norm of weight $\bm{W}^{(l)}$. We initialize $c^{(l)}$ with value $1$, and let model automatically learn how to preserve the embedding norms in the forward inference. A larger $c^{(l)}$ indicates to compensate the norm vanishing brought by the neighbor aggregation and non-linear activation. 

Our Ortho-GConv is a general module capable of  augmenting the existing GNNs. Without loss of generality, we adopt the simple identity weight and Newton's iteration to provide the orthogonal initialization and transformation, respectively. More the other orthogonal methods could be applied to further improve model's performance in the future.

\subsection{Steadiness Study of Our Model}
To validate the effectiveness of our Ortho-GConv in stabilizing the forward and backward signaling, we implement it upon the vanilla GCNs. While the signal magnifications $M_{\mathrm{sig}}$ are constrained around value $1$ for models of different depths in Figure~\ref{fig: analysis}.(a), the gradient norms $||\frac{\partial \mathcal{L}}{\partial \bm{W}^{(l)}}||_F$ are comparable for the different layers within an $8$-layer model in Figure~\ref{fig: analysis}.(c). In other word, our Ortho-GConv could constrain the magnitudes of node embedding to stabilize the forward inference, meanwhile maintaining the gradients at the backward direction to update model effectively.

\section{Experiments}
In this section, we conduct extensive experiments to evaluate our model aimed at answering the following questions:
\begin{itemize}
\item \textbf{Q1}: 
How effective is the proposed Ortho-GConv applied to current popular GNNs on the full-supervised node classification and graph classification tasks?

\item \textbf{Q2}: How much do the hybrid weight initialization, orthogonal transformation and orthogonal regularization influence the Ortho-GConv?

\end{itemize}

\noindent We provide more details of the results and analysis on semi-supervised node classification in Appendix.5.

\paragraph{Benchmark Datasets.} 

For full-supervised node classification task, we use Cora~\cite{sen2008collective}, CiteSeer~\cite{sen2008collective}, PubMed~\cite{sen2008collective}, and three sub-sets of WebKB~\cite{pei2020geom}: Cornell, Texas and Wisconsin to evaluate the performance. 
For graph classification task, we use the protein datasets including D\&D~\cite{dobson2003distinguishing} and PROTEINS~\cite{DBLP:conf/ismb/BorgwardtOSVSK05}. 
In addition, we conduct experiments on ogbn-arxiv~\cite{hu2020open} to evaluate the scalability and performance of Ortho-GConv on large graphs. 
The statistics of datasets and parameter settings are provided in Appendix.3 and 4, respectively.

\subsection{Full-supervised Node Classification}

We construct two experiments to evaluate the performance of our proposed Ortho-GConv, namely the comparison with different layers and the comparison with SOTA. We apply the same full-supervised experimental setting and training setting with GCNII~\cite{chen2020simple}, and use six datasets, including Cora, Citeseer, Pubmed, Cornell, Texas and Wisconsin to evaluate the performance. For each dataset, we randomly split the nodes of each class into 60\%, 20\% and 20\% for training, validation and testing, respectively. In addition, we conduct experiments on ogbn-arxiv to further evaluate the performance of our proposed Ortho-GConv on large graph. For this dataset, we split the nodes of each class into 54\%, 18\% and 28\% for training, validation and testing by following the previous effort~\cite{DBLP:conf/nips/HuFZDRLCL20}.

\textbf{(1) Comparison with Different Layers.} We take GCN, JKNet and GCNII as the three backbones and compare our proposed Ortho-GConv with their original models at 2/4/8 layers respectively. The experimental results are shown in Table \ref{tab:full_overall}. 
We make the following conclusions: 

\romannumeral1) Ortho-GConv generally improves the performances of all the backbones on each dataset under different numbers of layers. For example, Ortho-GConv delivers average improvements of 1.9\% and 2.1\% over the backbones with 2 layers on Cora and Citeseer, respectively; while achieving remarkable gains of 2.8\% and 12.4\% in the case of 8 layers.

\romannumeral2) With the increase of the number of layers, the performances of GNN models decrease significantly. However, our proposed Ortho-GConv achieves the comparable performances for 2 and 8 layers, which relieve the performance degrading of shallow GNNs as analyzed before. This phenomenon is attributed to the advantage of our Ortho-GConv that can solve the problem of gradient vanishing and make the model inference stable.

\romannumeral3)Ortho-GConv also generally achieves better performances on ogbn-arxiv than the backbones. The results demonstrate that Ortho-GConv is applicable to large graph.

Therefore, our proposed Ortho-GConv could tackle the problems of forward inference explosion and backward gradient vanishing problems in the vanilla shallow GNNs, which enable them to be trained steadily.

\textbf{(2) Comparison with SOTA.} 

In order to verify the overall performance of Ortho-GConv, we select the best performance from each backbone with Ortho-GConv and compare with the current popular SOTA methods.

We follow the same setting with GCNII and repeat the experiment 5 times, and report the average results in Table~\ref{tab:full_sota}. 
The experimental results reveal that our proposed method outperforms all the baselines. 
Ortho-GCNII obtains 2.2\% average improvement on all datasets. Especially, Ortho-GConv delivers an improvement of 7.4\% over GCNII on Texas.
In addition, the performance of few-layer GCNII model equipped with Ortho-GConv is better than deep GNN models, which demonstrates the superiority of the Ortho-GConv. More details about the results with layer number information are reported in Appendix.6.

\begin{table}[htp]
\setlength{\abovecaptionskip}{0.cm}
\setlength{\tabcolsep}{4pt}
  \centering
  \small
  \caption{Accuracy (\%) comparisons with SOTA on full-supervised tasks. The highest performances are in bold. }
  \
  \label{tab:full_sota}
    \begin{tabular}{ccccccc}
    \toprule
    Method&  Cora  &Cite.  &Pumb.   &Corn.  &Texa.  &Wisc.\cr
    \midrule
    GCN           &85.77  &73.68  &87.91  &57.84  &55.68  &49.02  \cr
    GAT         &86.37  &74.32  &87.62  &54.32  &58.38  &49.41 \cr
    Geom-GCN           &85.19  &{\bf77.99}  &90.05   &56.76  &57.58  &58.24  \cr
    APPNP         &87.87  &76.53  &89.40  &73.51  &65.41  &69.02 \cr
    JKNet           &86.20  &75.89  &89.55  &62.16  &62.70 &64.31  \cr
    Incep(DropEgde)           &86.86  &76.83   &89.18   &61.62  &57.84    &50.20 \cr
    GCNII           &88.49   &77.08    &89.78    &74.86   &71.46    &75.30 \cr
    Ortho-GCNII           &{\bf 88.81}   &77.26    &{\bf90.30}    &{\bf76.22}   &{\bf77.84}   &{\bf77.25}  \cr
    \bottomrule
    \end{tabular}
\end{table}

\begin{table*}[htp]
\setlength{\abovecaptionskip}{0.cm}
    \small
    \centering
  \caption{Mean classification accuracy (\%) of full-supervised node classification comparisons on different backbones with/without Ortho-GConv on full-supervised node classification tasks}
  \label{tab:full_overall}
\centering
\begin{tabular}{ll|rrrrrr}
\toprule
\multirow{2}{*}{Dataset}  & \multirow{2}{*}{Backbone} & \multicolumn{2}{c}{2 layers} & \multicolumn{2}{c}{4 layers} & \multicolumn{2}{c}{8 layers} \\
                          &                           & \multicolumn{1}{c}{original~} & \multicolumn{1}{c}{Ortho-GConv} & \multicolumn{1}{c}{original~} & \multicolumn{1}{c}{Ortho-GConv} & \multicolumn{1}{c}{original~} & \multicolumn{1}{c}{Ortho-GConv}  \\
\hline
\multirow{3}{*}{Cora}     & GCN                       &85.77$\pm$1.73                         & {\bf87.28$\pm$1.68}                           &82.37$\pm$2.47                           & {\bf86.20$\pm$1.75}                              &81.13$\pm$2.78                            & {\bf85.27$\pm$1.64}                               \\
                         & JKNet                     &85.96$\pm$1.54                           &{\bf87.36$\pm$1.74}                           &86.20$\pm$1.45                            &{\bf87.12$\pm$2.25}                             &85.84$\pm$1.64                          &{\bf87.24$\pm$2.09}                            \\
                          & GCNII                     &86.28$\pm$0.79                           & {\bf88.49$\pm$1.59}                              &85.70$\pm$2.10             & {\bf88.81$\pm$1.69}                             &86.80$\pm$2.10                          & {\bf88.41$\pm$1.43}                            \\
\hline 
\multirow{3}{*}{Citeseer} & GCN                       &73.68$\pm$1.69                          & {\bf75.59$\pm$1.78}                            & 68.03$\pm$5.90                            & {\bf74.80$\pm$1.11}                             & 53.10$\pm$6.34                            & {\bf71.61$\pm$2.46}                              \\
                          & JKNet                     &75.89$\pm$1.54                         & {\bf76.89$\pm$1.64}                            &74.97$\pm$1.76                           &{\bf76.11$\pm$1.76}                             &74.85$\pm$1.69                          &{\bf75.60$\pm$1.95}                           \\
                          & GCNII                     &75.31$\pm$2.36                            & {\bf77.26$\pm$1.83}                              &75.60$\pm$1.70                            &{\bf76.94$\pm$2.10}                              &76.10$\pm$2.10                          & {\bf77.11$\pm$2.20}                             \\
\hline
\multirow{3}{*}{Pubmed}   & GCN                       &{\bf87.91$\pm$0.44}                       & 86.04$\pm$0.61               & 77.00$\pm$7.55                            & {\bf84.68$\pm$0.55}                              & 69.49$\pm$0.98                            & {\bf83.75$\pm$0.50}                              \\
                          & JKNet                     & 89.40$\pm$0.30                            & {\bf89.46$\pm$0.28}                           &89.47$\pm$0.44                           &{\bf 89.54$\pm$0.38}                             &89.55$\pm$0.47                          &{\bf 89.57$\pm$0.21}   \\
                          & GCNII                     &89.51$\pm$0.69                            &{\bf90.30$\pm$0.30}                             &89.50$\pm$0.40                            &{\bf90.04$\pm$0.32}                             &89.78$\pm$0.33           &{\bf89.80$\pm$0.43}                             \\
\hline
\multirow{3}{*}{Corn.}   & GCN                       & {\bf52.70$\pm$5.05}                         & 58.38$\pm$3.62                           &57.84$\pm$3.00                            & {\bf57.84$\pm$3.08}                          &57.84$\pm$3.00                            & {\bf57.84$\pm$3.08}                           \\
                          & JKNet                     & 62.16$\pm$5.05                            &{\bf63.24$\pm$4.90}                            &52.97$\pm$11.6                            & {\bf58.92$\pm$4.44}                              &56.22$\pm$7.97                          &{\bf58.92$\pm$5.53}  \\
                          & GCNII                     &58.92$\pm$4.44                           & {\bf74.05$\pm$4.09}                              &66.00$\pm$6.20                            & {\bf75.14$\pm$6.72}                              &74.10$\pm$5.60                          & {\bf75.14$\pm$7.13}                           \\
\hline
\multirow{3}{*}{Texa.}   & GCN                       &55.14$\pm$7.78                       & {\bf61.08$\pm$9.08}                          & 55.68$\pm$5.60                            & {\bf58.92$\pm$7.49}                              & 54.59$\pm$7.00                            & {\bf58.38$\pm$6.22}                            \\
                          & JKNet                     &58.38$\pm$6.22                           &{\bf61.08$\pm$8.01}                             &62.70$\pm$4.00                            &{\bf62.70$\pm$5.85}                              &{\bf62.16$\pm$5.05}            &60.54$\pm$3.52   \\
                          & GCNII                     &69.73$\pm$13.30                        &{\bf77.84$\pm$6.72}                          &71.40$\pm$8.00                            &{\bf77.30$\pm$5.60}                             &70.80$\pm$5.20                &{\bf75.14$\pm$3.52}                       \\
\hline
\multirow{3}{*}{Wisc.}   & GCN                       & 49.02$\pm$3.66                          &{\bf50.59$\pm$8.40}                          & 46.67$\pm$7.76                            & {\bf48.24$\pm$9.46}                              & 40.00$\pm$10.6                            & {\bf46.67$\pm$9.44}                              \\
                          & JKNet           & 64.31$\pm$6.41                      & {\bf69.41$\pm$5.64}                             & 59.61$\pm$4.06                            &{\bf68.63$\pm$4.80}                             &56.86$\pm$3.10                          & {\bf65.88$\pm$5.11}   \\
                          & GCNII      &72.94$\pm$9.23     & {\bf77.25$\pm$3.54}                 &74.50$\pm$7.80     &{\bf77.25$\pm$5.11}          &{\bf75.30$\pm$8.10}        & 75.29$\pm$6.29                            \\
\hline
\multirow{2}{*}{ogbn-arxiv}   & GCN                       &71.28$\pm$0.28                          &  {\bf71.33$\pm$0.26}                           & 72.30$\pm$0.17                           & {\bf72.30$\pm$0.13}                              &71.84$\pm$0.27                  & {\bf71.87$\pm$0.12}                              \\
                                     
                                       & GCNII                    &71.24$\pm$0.19                   &{\bf71.35$\pm$0.21}                              &  71.21$\pm$0.19                         & {\bf71.38$\pm$0.14}                               &{\bf72.51$\pm$0.28}                       &72.44$\pm$0.46                              \\
\bottomrule
\end{tabular}
\end{table*}

\subsection{Graph Classification}
For the graph classification task, we use Graph-U-Nets~\cite{gao2019graph} as the backbone, PSCN~\cite{niepert2016learning}, DGCNN\cite{zhang2018end}, and DiffPool\cite{DBLP:conf/nips/YingY0RHL18} as the comparison baselines, and conduct experiments on the D\&D and PROTEINS datasets to evaluate our model.
We follow the same experimental settings as Graph-U-Nets for fair comparison, and fix the parameters $T$ = 4, $\beta$ = 0.4.
The experimental results are reported in Table~\ref{tab:graph classification}. We can see that applying Ortho-GConv on the Graph-U-Nets (g-U-Nets) achieves new state-of-the-art performances on datasets D\&D and PROTEINS, which demonstrates the effectiveness of Ortho-GConv on the graph classification task.
\begin{table}[htp]
\setlength{\abovecaptionskip}{0.cm}
\setlength{\tabcolsep}{1.7pt}
  \centering
  \small
  \caption{Accuracy (\%) comparisons with SOTA on graph classification tasks. The highest performances are in bold. }
  \
  \label{tab:graph classification}
    \begin{tabular}{cccccc}
    \toprule
    Dataset&  PSCN  &DGCNN  &DiffPool   &g-U-Nets  &Ortho-g-U-Nets.\cr
    \midrule
        D\&D           &76.27  &79.37  &80.64  &83.00  &{\bf 83.87}   \cr
        PROTEINS           &75.00  &76.26  &76.25  &77.68  &{\bf 78.78}   \cr
    \bottomrule
    \end{tabular}
\end{table}

In summary, our proposed Ortho-GConv gains a signiﬁcant performance improvement over representative baseline methods on the node classification, and graph classification tasks, which answers the first question asked at the beginning of this section. 

\subsection{Ablation Study}

To study the importance of Ortho-GConv's three perspectives, we construct several variants of our model on the hybrid weight initialization, orthogonal transformation and orthogonal regularization, and define them as follows: 1) Ortho-GCN w/o initialization, omitting the hybrid weight initialization module from Ortho-GCN; 2) Ortho-GCN w/o transformation, omitting the orthogonal transformation from Ortho-GCN; 3) Ortho-GCN w/o regularization, omitting the orthogonal regularization from Ortho-GCN. To shed light on the contributions of the three perspectives, we report the semi-supervised node classification results of Ortho-GCN and their variants on Cora, as shown in Figure \ref{fig: ablation}.(a). We have the following observations:

\romannumeral1) Compared with GCN, Ortho-GCN achieves a better performance than the three ablated variants and the GCN model. It further demonstrates the importance of three orthogonal techniques for stabilizing the model.

\romannumeral2) Removing the orthogonal initialization from the proposed model has a considerable impact on the performance, which suggests that this component plays an important role in node classification task.

The experimental results further demonstrate the importance of the three key components of our model, which correspondingly answers the second question.

Besides, we provide an ablation study to show how the iteration number $T$ affects on the performance and training time of Ortho-GConv. We conduct experiments on Cora dataset using 2-layer and 8-layer Ortho-GCN model. The results are reported in Figure~\ref{fig: ablation}.(b) and (c), respectively.
We find that as $T$ increases, the time consumption becomes larger and larger. From the Figure~\ref{fig: ablation}.(b) and (c), we can also observe that larger iteration numbers and smaller iteration numbers reduce the performance of our proposed model. When the iteration number is 4, we get the best performance. In conclusion, appropriate $T$ is optimized to achieve high accuracy with acceptable time complexity.

\begin{figure}[htp]
    \centering
    \includegraphics[width=8.5cm]{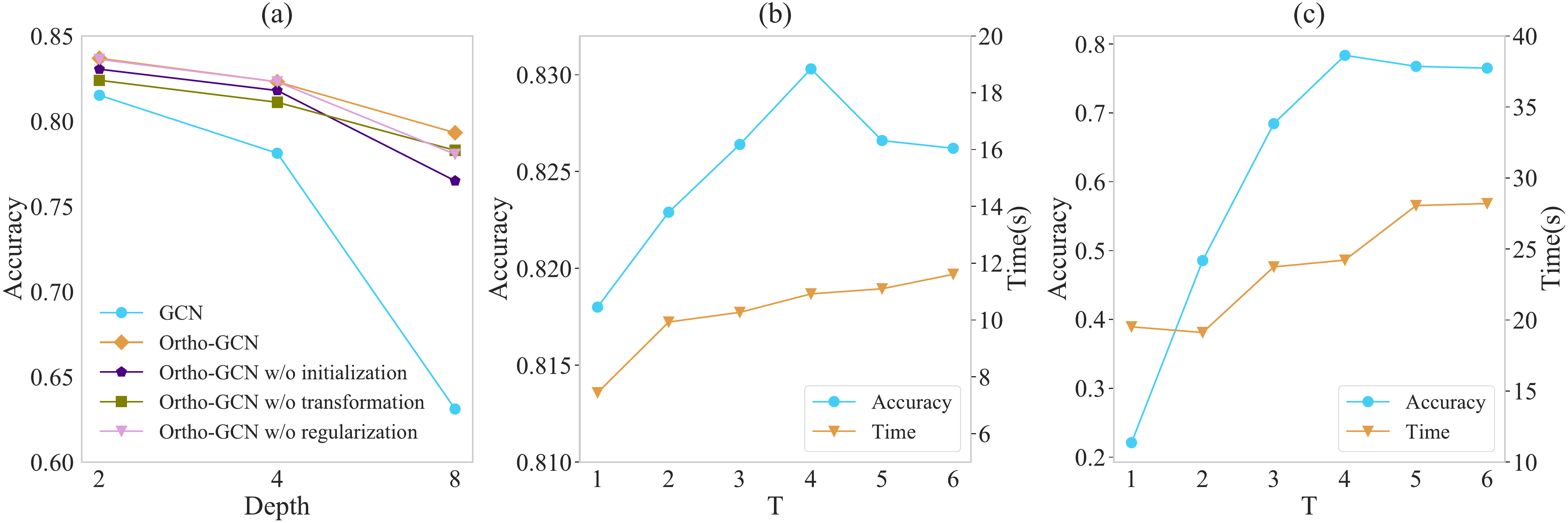}
    \caption{\textbf{(a)} Depth vs Accuracy for GCN, our model and three ablated models on Cora. \textbf{(b)} Effects of the iteration number $T$ of the 2-layer Ortho-GCN on Cora dataset. \textbf{(c)} Effects of the iteration number $T$ of the 8-layer Ortho-GCN on Cora dataset.}
    \label{fig: ablation}
\end{figure}

\section{Conclusion}

In this paper, we first conduct a series of analytical experiments to probe the reason for degrading performances of GNNs when stacked with more convolutional layers. We argue that the primary factor is the unstable forward and backward signaling in GNNs. Then, we propose a  orthogonal  graph  convolutions to augment the GNN backbones to stabilize the model training and improve the generalization performance of the model.

The experiments show that our Ortho-GConv achieves the better performance than SOTA methods on node and graph classification tasks. 

\bibliography{latex_2022}

\begin{thebibliography}{55}
\providecommand{\natexlab}[1]{#1}

\bibitem[{Bansal, Chen, and Wang(2018)}]{bansal2018can}
Bansal, N.; Chen, X.; and Wang, Z. 2018.
\newblock Can we gain more from orthogonality regularizations in training deep
  CNNs?
\newblock \emph{arXiv preprint arXiv:1810.09102}.

\bibitem[{Borgwardt et~al.(2005)Borgwardt, Ong, Sch{\"{o}}nauer, Vishwanathan,
  Smola, and Kriegel}]{DBLP:conf/ismb/BorgwardtOSVSK05}
Borgwardt, K.~M.; Ong, C.~S.; Sch{\"{o}}nauer, S.; Vishwanathan, S. V.~N.;
  Smola, A.~J.; and Kriegel, H. 2005.
\newblock Protein function prediction via graph kernels.
\newblock In \emph{Proceedings Thirteenth International Conference on
  Intelligent Systems for Molecular Biology}, 47--56.

\bibitem[{Bruna et~al.(2014)Bruna, Zaremba, Szlam, and
  LeCun}]{bruna2013spectral}
Bruna, J.; Zaremba, W.; Szlam, A.; and LeCun, Y. 2014.
\newblock Spectral Networks and Locally Connected Networks on Graphs.
\newblock In \emph{2nd International Conference on Learning Representations}.

\bibitem[{Chen et~al.(2020{\natexlab{a}})Chen, Lin, Li, Li, Zhou, and
  Sun}]{chen2020measuring}
Chen, D.; Lin, Y.; Li, W.; Li, P.; Zhou, J.; and Sun, X. 2020{\natexlab{a}}.
\newblock Measuring and relieving the over-smoothing problem for graph neural
  networks from the topological view.
\newblock In \emph{Proceedings of the AAAI Conference on Artificial
  Intelligence}, volume~34, 3438--3445.

\bibitem[{Chen et~al.(2020{\natexlab{b}})Chen, Wu, Hong, Zhang, and
  Wang}]{chen2020revisiting}
Chen, L.; Wu, L.; Hong, R.; Zhang, K.; and Wang, M. 2020{\natexlab{b}}.
\newblock Revisiting graph based collaborative filtering: A linear residual
  graph convolutional network approach.
\newblock In \emph{Proceedings of the AAAI conference on artificial
  intelligence}, 27--34.

\bibitem[{Chen et~al.(2020{\natexlab{c}})Chen, Wei, Huang, Ding, and
  Li}]{chen2020simple}
Chen, M.; Wei, Z.; Huang, Z.; Ding, B.; and Li, Y. 2020{\natexlab{c}}.
\newblock Simple and deep graph convolutional networks.
\newblock In \emph{International Conference on Machine Learning}, 1725--1735.
  PMLR.

\bibitem[{Defferrard, Bresson, and
  Vandergheynst(2016)}]{defferrard2016convolutional}
Defferrard, M.; Bresson, X.; and Vandergheynst, P. 2016.
\newblock Convolutional neural networks on graphs with fast localized spectral
  filtering.
\newblock \emph{Advances in neural information processing systems}, 29:
  3844--3852.

\bibitem[{Dobson and Doig(2003)}]{dobson2003distinguishing}
Dobson, P.~D.; and Doig, A.~J. 2003.
\newblock Distinguishing enzyme structures from non-enzymes without alignments.
\newblock \emph{Journal of molecular biology}, 330(4): 771--783.

\bibitem[{Fey and Lenssen(2019)}]{fey2019fast}
Fey, M.; and Lenssen, J.~E. 2019.
\newblock Fast graph representation learning with PyTorch Geometric.
\newblock \emph{arXiv preprint arXiv:1903.02428}.

\bibitem[{Gao and Ji(2019)}]{gao2019graph}
Gao, H.; and Ji, S. 2019.
\newblock Graph u-nets.
\newblock In \emph{international conference on machine learning}, 2083--2092.
  PMLR.

\bibitem[{Glorot and Bengio(2010)}]{glorot2010understanding}
Glorot, X.; and Bengio, Y. 2010.
\newblock Understanding the difficulty of training deep feedforward neural
  networks.
\newblock In \emph{Proceedings of the thirteenth international conference on
  artificial intelligence and statistics}, 249--256. JMLR Workshop and
  Conference Proceedings.

\bibitem[{Hao et~al.(2020)Hao, Lu, Huang, Wang, Hu, Liu, Chen, and
  Lee}]{hao2020asgn}
Hao, Z.; Lu, C.; Huang, Z.; Wang, H.; Hu, Z.; Liu, Q.; Chen, E.; and Lee, C.
  2020.
\newblock ASGN: An active semi-supervised graph neural network for molecular
  property prediction.
\newblock In \emph{Proceedings of the 26th ACM SIGKDD International Conference
  on Knowledge Discovery \& Data Mining}, 731--752.

\bibitem[{Henaff, Bruna, and LeCun(2015)}]{henaff2015deep}
Henaff, M.; Bruna, J.; and LeCun, Y. 2015.
\newblock Deep convolutional networks on graph-structured data.
\newblock \emph{arXiv preprint arXiv:1506.05163}.

\bibitem[{Hu et~al.(2020{\natexlab{a}})Hu, Fey, Zitnik, Dong, Ren, Liu,
  Catasta, and Leskovec}]{hu2020open}
Hu, W.; Fey, M.; Zitnik, M.; Dong, Y.; Ren, H.; Liu, B.; Catasta, M.; and
  Leskovec, J. 2020{\natexlab{a}}.
\newblock Open graph benchmark: Datasets for machine learning on graphs.
\newblock \emph{arXiv preprint arXiv:2005.00687}.

\bibitem[{Hu et~al.(2020{\natexlab{b}})Hu, Fey, Zitnik, Dong, Ren, Liu,
  Catasta, and Leskovec}]{DBLP:conf/nips/HuFZDRLCL20}
Hu, W.; Fey, M.; Zitnik, M.; Dong, Y.; Ren, H.; Liu, B.; Catasta, M.; and
  Leskovec, J. 2020{\natexlab{b}}.
\newblock Open Graph Benchmark: Datasets for Machine Learning on Graphs.
\newblock In \emph{Advances in Neural Information Processing Systems 33: Annual
  Conference on Neural Information Processing Systems}.

\bibitem[{Huang et~al.(2020)Huang, Liu, Zhu, Wan, Yuan, Li, and
  Shao}]{DBLP:conf/cvpr/Huang00WYL020}
Huang, L.; Liu, L.; Zhu, F.; Wan, D.; Yuan, Z.; Li, B.; and Shao, L. 2020.
\newblock Controllable Orthogonalization in Training DNNs.
\newblock In \emph{2020 {IEEE/CVF} Conference on Computer Vision and Pattern
  Recognition}, 6428--6437. {IEEE}.

\bibitem[{Huang et~al.(2018)Huang, Liu, Lang, Yu, Wang, and
  Li}]{huang2018orthogonal}
Huang, L.; Liu, X.; Lang, B.; Yu, A.~W.; Wang, Y.; and Li, B. 2018.
\newblock Orthogonal weight normalization: Solution to optimization over
  multiple dependent stiefel manifolds in deep neural networks.
\newblock In \emph{Thirty-Second AAAI Conference on Artificial Intelligence}.

\bibitem[{Ioffe and Szegedy(2015)}]{ioffe2015batch}
Ioffe, S.; and Szegedy, C. 2015.
\newblock Batch normalization: Accelerating deep network training by reducing
  internal covariate shift.
\newblock In \emph{International conference on machine learning}, 448--456.
  PMLR.

\bibitem[{Kipf and Welling(2017)}]{kipf2016semi}
Kipf, T.~N.; and Welling, M. 2017.
\newblock Semi-Supervised Classification with Graph Convolutional Networks.
\newblock In \emph{5th International Conference on Learning Representations}.
  OpenReview.net.

\bibitem[{Klicpera, Bojchevski, and
  G{\"{u}}nnemann(2019{\natexlab{a}})}]{DBLP:conf/iclr/KlicperaBG19}
Klicpera, J.; Bojchevski, A.; and G{\"{u}}nnemann, S. 2019{\natexlab{a}}.
\newblock Predict then Propagate: Graph Neural Networks meet Personalized
  PageRank.
\newblock In \emph{7th International Conference on Learning Representations}.
  OpenReview.net.

\bibitem[{Klicpera, Bojchevski, and
  G{\"{u}}nnemann(2019{\natexlab{b}})}]{APPNP}
Klicpera, J.; Bojchevski, A.; and G{\"{u}}nnemann, S. 2019{\natexlab{b}}.
\newblock Predict then Propagate: Graph Neural Networks meet Personalized
  PageRank.
\newblock In \emph{7th International Conference on Learning Representations}.
  OpenReview.net.

\bibitem[{Kong et~al.(2020)Kong, Li, Ding, Wu, Zhu, Ghanem, Taylor, and
  Goldstein}]{kong2020flag}
Kong, K.; Li, G.; Ding, M.; Wu, Z.; Zhu, C.; Ghanem, B.; Taylor, G.; and
  Goldstein, T. 2020.
\newblock Flag: Adversarial data augmentation for graph neural networks.
\newblock \emph{arXiv preprint arXiv:2010.09891}.

\bibitem[{Le, Jaitly, and Hinton(2015)}]{le2015simple}
Le, Q.~V.; Jaitly, N.; and Hinton, G.~E. 2015.
\newblock A simple way to initialize recurrent networks of rectified linear
  units.
\newblock \emph{arXiv preprint arXiv:1504.00941}.

\bibitem[{Levie et~al.(2018)Levie, Monti, Bresson, and
  Bronstein}]{levie2018cayleynets}
Levie, R.; Monti, F.; Bresson, X.; and Bronstein, M.~M. 2018.
\newblock Cayleynets: Graph convolutional neural networks with complex rational
  spectral filters.
\newblock \emph{IEEE Transactions on Signal Processing}, 67(1): 97--109.

\bibitem[{Li et~al.(2019)Li, Muller, Thabet, and Ghanem}]{Li_2019_ICCV}
Li, G.; Muller, M.; Thabet, A.; and Ghanem, B. 2019.
\newblock DeepGCNs: Can GCNs Go As Deep As CNNs?
\newblock In \emph{Proceedings of the IEEE/CVF International Conference on
  Computer Vision (ICCV)}.

\bibitem[{Li, Han, and Wu(2018)}]{li2018deeper}
Li, Q.; Han, Z.; and Wu, X.-M. 2018.
\newblock Deeper insights into graph convolutional networks for semi-supervised
  learning.
\newblock In \emph{Thirty-Second AAAI conference on artificial intelligence}.

\bibitem[{Li et~al.(2018)Li, Wang, Zhu, and Huang}]{li2018adaptive}
Li, R.; Wang, S.; Zhu, F.; and Huang, J. 2018.
\newblock Adaptive graph convolutional neural networks.
\newblock In \emph{Proceedings of the AAAI Conference on Artificial
  Intelligence}.

\bibitem[{Liu, Gao, and Ji(2020)}]{liu2020towards}
Liu, M.; Gao, H.; and Ji, S. 2020.
\newblock Towards deeper graph neural networks.
\newblock In \emph{Proceedings of the 26th ACM SIGKDD International Conference
  on Knowledge Discovery \& Data Mining}, 338--348.

\bibitem[{Niepert, Ahmed, and Kutzkov(2016)}]{niepert2016learning}
Niepert, M.; Ahmed, M.; and Kutzkov, K. 2016.
\newblock Learning convolutional neural networks for graphs.
\newblock In \emph{International conference on machine learning}, 2014--2023.
  PMLR.

\bibitem[{Nt and Maehara(2019)}]{nt2019revisiting}
Nt, H.; and Maehara, T. 2019.
\newblock Revisiting graph neural networks: All we have is low-pass filters.
\newblock \emph{arXiv preprint arXiv:1905.09550}.

\bibitem[{Oono and Suzuki(2019)}]{oono2019graph}
Oono, K.; and Suzuki, T. 2019.
\newblock Graph neural networks exponentially lose expressive power for node
  classification.
\newblock \emph{arXiv preprint arXiv:1905.10947}.

\bibitem[{Paszke et~al.(2017)Paszke, Gross, Chintala, Chanan, Yang, DeVito,
  Lin, Desmaison, Antiga, and Lerer}]{paszke2017automatic}
Paszke, A.; Gross, S.; Chintala, S.; Chanan, G.; Yang, E.; DeVito, Z.; Lin, Z.;
  Desmaison, A.; Antiga, L.; and Lerer, A. 2017.
\newblock Automatic differentiation in pytorch.

\bibitem[{Pei et~al.(2020)Pei, Wei, Chang, Lei, and Yang}]{pei2020geom}
Pei, H.; Wei, B.; Chang, K.~C.; Lei, Y.; and Yang, B. 2020.
\newblock Geom-GCN: Geometric Graph Convolutional Networks.
\newblock In \emph{8th International Conference on Learning Representations}.
  OpenReview.net.

\bibitem[{Rong et~al.(2020)Rong, Huang, Xu, and Huang}]{rong2019dropedge}
Rong, Y.; Huang, W.; Xu, T.; and Huang, J. 2020.
\newblock DropEdge: Towards Deep Graph Convolutional Networks on Node
  Classification.
\newblock In \emph{8th International Conference on Learning Representations}.
  OpenReview.net.

\bibitem[{Saxe, McClelland, and Ganguli(2013)}]{saxe2013exact}
Saxe, A.~M.; McClelland, J.~L.; and Ganguli, S. 2013.
\newblock Exact solutions to the nonlinear dynamics of learning in deep linear
  neural networks.
\newblock \emph{arXiv preprint arXiv:1312.6120}.

\bibitem[{Saxe, McClelland, and Ganguli(2014)}]{DBLP:journals/corr/SaxeMG13}
Saxe, A.~M.; McClelland, J.~L.; and Ganguli, S. 2014.
\newblock Exact solutions to the nonlinear dynamics of learning in deep linear
  neural networks.
\newblock In \emph{2nd International Conference on Learning Representations}.

\bibitem[{Sen et~al.(2008)Sen, Namata, Bilgic, Getoor, Galligher, and
  Eliassi-Rad}]{sen2008collective}
Sen, P.; Namata, G.; Bilgic, M.; Getoor, L.; Galligher, B.; and Eliassi-Rad, T.
  2008.
\newblock Collective classification in network data.
\newblock \emph{AI magazine}, 29(3): 93--93.

\bibitem[{Tian et~al.(2019)Tian, Niu, Yan, and Tian}]{tian2019inferring}
Tian, Y.; Niu, Y.; Yan, J.; and Tian, F. 2019.
\newblock Inferring private attributes based on graph convolutional neural
  network in social networks.
\newblock In \emph{2019 International Conference on Networking and Network
  Applications (NaNA)}, 186--190. IEEE.

\bibitem[{Trockman and Kolter(2021)}]{DBLP:conf/iclr/TrockmanK21}
Trockman, A.; and Kolter, J.~Z. 2021.
\newblock Orthogonalizing Convolutional Layers with the Cayley Transform.
\newblock In \emph{9th International Conference on Learning Representations}.
  OpenReview.net.

\bibitem[{Tsuzuku, Sato, and Sugiyama(2018)}]{tsuzuku2018lipschitz}
Tsuzuku, Y.; Sato, I.; and Sugiyama, M. 2018.
\newblock Lipschitz-margin training: Scalable certification of perturbation
  invariance for deep neural networks.
\newblock \emph{arXiv preprint arXiv:1802.04034}.

\bibitem[{Veli{\v{c}}kovi{\'c} et~al.(2018)Veli{\v{c}}kovi{\'c}, Cucurull,
  Casanova, Romero, Li{\`o}, and Bengio}]{GAT}
Veli{\v{c}}kovi{\'c}, P.; Cucurull, G.; Casanova, A.; Romero, A.; Li{\`o}, P.;
  and Bengio, Y. 2018.
\newblock Graph Attention Networks.
\newblock In \emph{International Conference on Learning Representations}.

\bibitem[{Vorontsov et~al.(2017)Vorontsov, Trabelsi, Kadoury, and
  Pal}]{vorontsov2017orthogonality}
Vorontsov, E.; Trabelsi, C.; Kadoury, S.; and Pal, C. 2017.
\newblock On orthogonality and learning recurrent networks with long term
  dependencies.
\newblock In \emph{International Conference on Machine Learning}, 3570--3578.
  PMLR.

\bibitem[{Wang et~al.(2020)Wang, Chen, Chakraborty, and
  Yu}]{wang2020orthogonal}
Wang, J.; Chen, Y.; Chakraborty, R.; and Yu, S.~X. 2020.
\newblock Orthogonal convolutional neural networks.
\newblock In \emph{Proceedings of the IEEE/CVF conference on computer vision
  and pattern recognition}, 11505--11515.

\bibitem[{Wu et~al.(2019)Wu, Souza, Zhang, Fifty, Yu, and
  Weinberger}]{wu2019simplifying}
Wu, F.; Souza, A.; Zhang, T.; Fifty, C.; Yu, T.; and Weinberger, K. 2019.
\newblock Simplifying graph convolutional networks.
\newblock In \emph{International conference on machine learning}, 6861--6871.

\bibitem[{Xiao et~al.(2018)Xiao, Bahri, Sohl-Dickstein, Schoenholz, and
  Pennington}]{xiao2018dynamical}
Xiao, L.; Bahri, Y.; Sohl-Dickstein, J.; Schoenholz, S.; and Pennington, J.
  2018.
\newblock Dynamical isometry and a mean field theory of cnns: How to train
  10,000-layer vanilla convolutional neural networks.
\newblock In \emph{International Conference on Machine Learning}, 5393--5402.
  PMLR.

\bibitem[{Xie, Xiong, and Pu(2017)}]{xie2017all}
Xie, D.; Xiong, J.; and Pu, S. 2017.
\newblock All you need is beyond a good init: Exploring better solution for
  training extremely deep convolutional neural networks with orthonormality and
  modulation.
\newblock In \emph{Proceedings of the IEEE Conference on Computer Vision and
  Pattern Recognition}, 6176--6185.

\bibitem[{Xu et~al.(2018)Xu, Li, Tian, Sonobe, Kawarabayashi, and
  Jegelka}]{xu2018representation}
Xu, K.; Li, C.; Tian, Y.; Sonobe, T.; Kawarabayashi, K.-i.; and Jegelka, S.
  2018.
\newblock Representation learning on graphs with jumping knowledge networks.
\newblock In \emph{International Conference on Machine Learning}, 5453--5462.
  PMLR.

\bibitem[{Yang, Cohen, and Salakhudinov(2016)}]{split}
Yang, Z.; Cohen, W.; and Salakhudinov, R. 2016.
\newblock Revisiting semi-supervised learning with graph embeddings.
\newblock In \emph{International conference on machine learning}, 40--48.

\bibitem[{Ying et~al.(2018)Ying, You, Morris, Ren, Hamilton, and
  Leskovec}]{DBLP:conf/nips/YingY0RHL18}
Ying, Z.; You, J.; Morris, C.; Ren, X.; Hamilton, W.~L.; and Leskovec, J. 2018.
\newblock Hierarchical Graph Representation Learning with Differentiable
  Pooling.
\newblock In \emph{Annual Conference on Neural Information Processing Systems
  2018}, 4805--4815.

\bibitem[{Zhang et~al.(2018)Zhang, Cui, Neumann, and Chen}]{zhang2018end}
Zhang, M.; Cui, Z.; Neumann, M.; and Chen, Y. 2018.
\newblock An end-to-end deep learning architecture for graph classification.
\newblock In \emph{Thirty-Second AAAI Conference on Artificial Intelligence}.

\bibitem[{Zhao et~al.(2021)Zhao, Liu, Huang, Liu, and
  Zhang}]{DBLP:conf/ijcai/ZhaoLHLZ21}
Zhao, C.; Liu, S.; Huang, F.; Liu, S.; and Zhang, W. 2021.
\newblock {CSGNN:} Contrastive Self-Supervised Graph Neural Network for
  Molecular Interaction Prediction.
\newblock In \emph{Proceedings of the Thirtieth International Joint Conference
  on Artificial Intelligence}, 3756--3763. ijcai.org.

\bibitem[{Zhao and Akoglu(2019)}]{zhao2019pairnorm}
Zhao, L.; and Akoglu, L. 2019.
\newblock Pairnorm: Tackling oversmoothing in gnns.
\newblock \emph{arXiv preprint arXiv:1909.12223}.

\bibitem[{Zhou et~al.(2020{\natexlab{a}})Zhou, Huang, Li, Zha, Chen, and
  Hu}]{zhou2020towards}
Zhou, K.; Huang, X.; Li, Y.; Zha, D.; Chen, R.; and Hu, X. 2020{\natexlab{a}}.
\newblock Towards deeper graph neural networks with differentiable group
  normalization.
\newblock \emph{arXiv preprint arXiv:2006.06972}.

\bibitem[{Zhou et~al.(2019)Zhou, Song, Huang, and Hu}]{zhou2019auto}
Zhou, K.; Song, Q.; Huang, X.; and Hu, X. 2019.
\newblock Auto-gnn: Neural architecture search of graph neural networks.
\newblock \emph{arXiv preprint arXiv:1909.03184}.

\bibitem[{Zhou et~al.(2020{\natexlab{b}})Zhou, Song, Huang, Zha, Zou, Hu
  et~al.}]{zhou2020multi}
Zhou, K.; Song, Q.; Huang, X.; Zha, D.; Zou, N.; Hu, X.; et~al.
  2020{\natexlab{b}}.
\newblock Multi-channel graph neural networks.
\newblock International Joint Conferences on Artificial Intelligence.

\end{thebibliography}
\clearpage

\section{Appendix}

\subsection{Appendix 1: Proof for Theorem 1}

\begin{theorem}\label{theorem_1}
Given the linear GNN model with $L$ layers and the specific training loss $\mathcal{L}$, the gradient with respect to parameter ${{\bm{W}} ^{(l)}}$ at the $l$-th layer is given by:
\begin{equation}
 \begin{aligned}
 \label{eq: gradient}
 \frac{\partial \mathcal{L}}{\partial \bm{W}^{(l)}} 
 & =  (\bm{H}^{(l-1)})^\top(\hat{\bm{A}}^\top)^{L-l+1}\frac{\partial \mathcal{L}}{\partial \bm{H}^{(L)}} \\
 &\quad \cdot (\bm{W}^{(l+1)}\cdots\bm{W}^{(L)})^\top \in \mathbb{R}^{d\times d}.
 \end{aligned}
\end{equation}
\end{theorem}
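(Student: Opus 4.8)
The plan is to unroll the linear recursion $\bm{H}^{(l)} = \hat{\bm{A}}\,\bm{H}^{(l-1)}\,\bm{W}^{(l)}$ so that the dependence on $\bm{W}^{(l)}$ is exposed as a single bilinear factor, and then differentiate. First I would establish, by induction on $k$ for $0 \le k \le L$, the closed form
\[
\bm{H}^{(L)} = \hat{\bm{A}}^{\,k}\,\bm{H}^{(L-k)}\,\bm{W}^{(L-k+1)}\,\bm{W}^{(L-k+2)}\cdots\bm{W}^{(L)} .
\]
The base case $k=0$ is the definition; the inductive step replaces $\bm{H}^{(L-k)}$ by $\hat{\bm{A}}\,\bm{H}^{(L-k-1)}\,\bm{W}^{(L-k)}$ and uses associativity of matrix multiplication. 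Specializing to $k=L-l+1$ gives
\[
\bm{H}^{(L)} = \hat{\bm{A}}^{\,L-l+1}\,\bm{H}^{(l-1)}\,\bm{W}^{(l)}\,\bm{W}^{(l+1)}\cdots\bm{W}^{(L)} = \bm{U}\,\bm{W}^{(l)}\,\bm{V},
\]
where $\bm{U} := \hat{\bm{A}}^{\,L-l+1}\bm{H}^{(l-1)} \in \mathbb{R}^{n\times d}$ and $\bm{V} := \bm{W}^{(l+1)}\cdots\bm{W}^{(L)} \in \mathbb{R}^{d\times d}$ do not depend on $\bm{W}^{(l)}$.

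Second, I would apply the standard chain rule for a bilinear matrix map: if $\bm{Y} = \bm{U}\bm{X}\bm{V}$ and $\mathcal{L}$ depends on $\bm{X}$ only through $\bm{Y}$, then $\partial\mathcal{L}/\partial\bm{X} = \bm{U}^\top\,(\partial\mathcal{L}/\partial\bm{Y})\,\bm{V}^\top$ (this follows from $\mathrm{d}\mathcal{L} = \langle \partial\mathcal{L}/\partial\bm{Y},\ \bm{U}\,\mathrm{d}\bm{X}\,\bm{V}\rangle = \langle \bm{U}^\top(\partial\mathcal{L}/\partial\bm{Y})\bm{V}^\top,\ \mathrm{d}\bm{X}\rangle$ using cyclicity of the trace). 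Taking $\bm{X} = \bm{W}^{(l)}$ and $\bm{Y} = \bm{H}^{(L)}$ yields
\[
\frac{\partial \mathcal{L}}{\partial \bm{W}^{(l)}} = \bm{U}^\top\,\frac{\partial \mathcal{L}}{\partial \bm{H}^{(L)}}\,\bm{V}^\top = (\bm{H}^{(l-1)})^\top (\hat{\bm{A}}^{\,L-l+1})^\top \frac{\partial \mathcal{L}}{\partial \bm{H}^{(L)}} (\bm{W}^{(l+1)}\cdots\bm{W}^{(L)})^\top,
\]
and since $(\hat{\bm{A}}^{\,L-l+1})^\top = (\hat{\bm{A}}^\top)^{L-l+1}$ this is exactly the claimed identity. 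Because $\hat{\bm{A}}$ is symmetric the transpose on $\hat{\bm{A}}$ is cosmetic, but keeping it makes the shapes transparent and the argument independent of symmetry.

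An equivalent route avoids the global unrolling and instead uses two local backpropagation rules read off from $\bm{H}^{(l)} = \hat{\bm{A}}\,\bm{H}^{(l-1)}\,\bm{W}^{(l)}$, namely $\partial\mathcal{L}/\partial\bm{H}^{(l-1)} = \hat{\bm{A}}^\top(\partial\mathcal{L}/\partial\bm{H}^{(l)})(\bm{W}^{(l)})^\top$ and $\partial\mathcal{L}/\partial\bm{W}^{(l)} = (\hat{\bm{A}}\bm{H}^{(l-1)})^\top(\partial\mathcal{L}/\partial\bm{H}^{(l)})$; iterating the first rule from layer $L$ down to layer $l$ and substituting into the second recovers the same expression. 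Either way the content is an induction plus one invocation of the matrix chain rule, so I do not expect a substantive obstacle; the only thing demanding care is the bookkeeping of transposes and the left-to-right order of the trailing product $\bm{W}^{(l+1)}\cdots\bm{W}^{(L)}$, whose order reverses under transposition, which I would sanity-check on the small cases $l=L$ and $l=L-1$ before asserting the general pattern.
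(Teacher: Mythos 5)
Your proposal is correct, and in fact it contains two routes: your primary route (unroll the linear recursion to $\bm{H}^{(L)}=\hat{\bm{A}}^{\,L-l+1}\bm{H}^{(l-1)}\bm{W}^{(l)}\bm{W}^{(l+1)}\cdots\bm{W}^{(L)}$, then apply the bilinear chain rule once via the trace inner product) differs in organization from the paper's proof, while your ``equivalent route'' is precisely what the paper does. The paper works locally and recursively: it first writes $\frac{\partial \mathcal{L}}{\partial \bm{W}^{(l)}}=(\hat{\bm{A}}\bm{H}^{(l-1)})^\top\frac{\partial \mathcal{L}}{\partial \bm{H}^{(l)}}$, then iterates the backpropagation rule $\frac{\partial \mathcal{L}}{\partial \bm{H}^{(l)}}=\hat{\bm{A}}^\top\frac{\partial \mathcal{L}}{\partial \bm{H}^{(l+1)}}{\bm{W}^{(l+1)}}^{\top}$ down from layer $L$ to obtain $\frac{\partial \mathcal{L}}{\partial \bm{H}^{(l)}}=(\hat{\bm{A}}^\top)^{L-l}\frac{\partial \mathcal{L}}{\partial \bm{H}^{(L)}}{\bm{W}^{(L)}}^\top\cdots{\bm{W}^{(l+1)}}^\top$, and substitutes. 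The two arguments compute the same thing; what your global unrolling buys is a cleaner justification of the weight-gradient step, since the differential identity $\mathrm{d}\mathcal{L}=\langle \partial\mathcal{L}/\partial\bm{Y},\ \bm{U}\,\mathrm{d}\bm{X}\,\bm{V}\rangle=\langle \bm{U}^\top(\partial\mathcal{L}/\partial\bm{Y})\bm{V}^\top,\ \mathrm{d}\bm{X}\rangle$ makes the transpose and ordering bookkeeping airtight, whereas the paper's shorthand $\frac{\partial \mathcal{L}}{\partial \bm{W}^{(l)}}=\frac{\partial \bm{H}^{(l)}}{\partial \bm{W}^{(l)}}\frac{\partial \mathcal{L}}{\partial \bm{H}^{(l)}}$ is formally loose (the inner factor is a fourth-order tensor) even though its conclusion is the standard correct formula; conversely, the paper's layer-wise recursion is the form that directly supports the gradient-vanishing discussion in the main text. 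Both hinge on the same implicit fact you state explicitly, namely that $\mathcal{L}$ depends on $\bm{W}^{(l)}$ only through $\bm{H}^{(L)}$, and your sanity checks at $l=L$ and $l=L-1$ confirm the reversed order of the transposed product.
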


\begin{proof} The forward inference at the $l$-th layer of GCNs without non-linear function is formally defined as:
\begin{equation}
\bm{H}^{(l)}=\hat{\bm{A}} \bm{H}^{(l-1)} \bm{W}^{(l)}.
\label{eq:GCN}
\end{equation}

We can see that the training loss at layer $l$, the derivative $\mathcal{L}$ with respect to $\bm{W}^{(l)}$ is:
\begin{equation}
 \begin{aligned}
 \frac{\partial \mathcal{L}}{\partial \bm{W}^{(l)}}& =
 \frac{\partial \bm{H}^{(l)}}{\partial \bm{W}^{(l)}} \frac{\partial \mathcal{L}}{\partial \bm{H}^{(l)}} \\
 & = (\hat{\bm{A}} \boldsymbol{H}^{(l-1)})^\top \frac{\partial \mathcal{L}}{\partial \bm{H}^{(l)}} \in \mathbb{R}^{d\times d}.
 \end{aligned}
 \label{eq: gradient2}
\end{equation}

Following the chain rule of gradient backpropagation in terms of matrix, we can calculate the$\frac{\partial \mathcal{L}}{\partial \bm{H}^{(l)}}\in\mathbb{R}^{n\times d}$ by following equation:
\begin{equation}
 \begin{aligned}
 \frac{\partial \mathcal{L}}{\partial \bm{H}^{(l)}} & = \hat{\bm{A}}^\top \frac{\partial \mathcal{L}}{\partial \bm{H}^{(l+1)}} {\bm{W}^{(l+1)}}^\top \\
 & = (\hat{\bm{A}}^\top)^{L-l}\frac{\partial \mathcal{L}}{\partial \bm{H}^{(L)}}{\bm{W}^{(L)}}^\top\cdots{\bm{W}^{(l+1)}}^\top
 \end{aligned}
\end{equation}
Following the above, the gradient in Eq.~\eqref{eq: gradient2} can be represented by:
\begin{equation}
 \begin{aligned}
 \label{eq: gradient}
  \frac{\partial \mathcal{L}}{\partial \bm{W}^{(l)}} 
  & =  (\bm{H}^{(l-1)})^\top(\hat{\bm{A}}^\top)^{L-l+1}\frac{\partial \mathcal{L}}{\partial \bm{H}^{(L)}} \\
  &\quad \cdot (\bm{W}^{(l+1)}\cdots\bm{W}^{(L)})^\top \in \mathbb{R}^{d\times d}.
 \end{aligned}
\end{equation}
\end{proof}

\subsection{Appendix 2: Proof for Theorem 2}

\begin{theorem}\label{theorem_2}
Let $\bm{W}^{(l)} \in \mathbb{R}^{d \times d}$ denote the orthogonal matrix adopted by the feature transformation at the $l$th layer. Let $\hat{\bm{h}}^{(l)}$ and $\bm{h}^{(l)}$ denote the node embeddings, which are given by specific rows in matrices $\hat{\bm{H}}^{(l)}$ and $\bm{H}^{(l)}$, respectively. 
(1) Assume the mean of $\hat{\bm{h}}^{(l)}$ is $\mathbb{E}_{\hat{\bm{h}}^{(l)}}[\hat{\bm{h}}^{(l)}]=\mathbf{0}$, and the covariance matrix of $\hat{\bm{h}}^{(l)}$ is $\operatorname{cov}(\hat{\bm{h}}^{(l)})=\sigma^{2} \bm{I}$. Then $\mathbb{E}_{\bm{h}^{(l)}}[\bm{h}^{(l)}]=\mathbf{0}$, and $\operatorname{cov}(\bm{h}^{(l)})=\sigma^{2} \bm{I}$. 
(2) We have $\|\bm{H}^{(l)}\|_F=\|\hat{\bm{H}}^{(l)}\|_F$.
(3) Given the back-propagated gradient $\frac{\partial \mathcal{L}}{\partial \bm{H}^{(l)}}$, we have $\left\|\frac{\partial \mathcal{L}}{\partial \bm{H}^{(l)}}\right\|_F=\left\|\frac{\partial \mathcal{L}}{\partial \hat{\bm{H}}^{(l)}}\right\|_F$.
\end{theorem}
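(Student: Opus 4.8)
\textbf{Proof plan for Theorem~\ref{theorem_2}.} The plan is to prove the three claims in order, since each is essentially a direct consequence of the defining property $\bm{W}^{(l)\top}\bm{W}^{(l)} = \bm{W}^{(l)}\bm{W}^{(l)\top} = \bm{I}$ together with the layer relation $\bm{H}^{(l)} = \hat{\bm{H}}^{(l)}\bm{W}^{(l)}$, i.e. $\bm{h}^{(l)} = \bm{W}^{(l)\top}\hat{\bm{h}}^{(l)}$ at the level of individual (column-vector) embeddings. For part (1), I would compute $\mathbb{E}[\bm{h}^{(l)}] = \bm{W}^{(l)\top}\mathbb{E}[\hat{\bm{h}}^{(l)}] = \bm{W}^{(l)\top}\mathbf{0} = \mathbf{0}$ by linearity of expectation, and then $\operatorname{cov}(\bm{h}^{(l)}) = \bm{W}^{(l)\top}\operatorname{cov}(\hat{\bm{h}}^{(l)})\bm{W}^{(l)} = \bm{W}^{(l)\top}(\sigma^2\bm{I})\bm{W}^{(l)} = \sigma^2\,\bm{W}^{(l)\top}\bm{W}^{(l)} = \sigma^2\bm{I}$, using the standard transformation rule for covariance under a linear map and orthogonality in the last step.

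For part (2), I would use the characterization $\|\bm{M}\|_F^2 = \operatorname{tr}(\bm{M}^\top\bm{M})$. Writing $\bm{H}^{(l)} = \hat{\bm{H}}^{(l)}\bm{W}^{(l)}$, we get $\|\bm{H}^{(l)}\|_F^2 = \operatorname{tr}\big(\bm{W}^{(l)\top}\hat{\bm{H}}^{(l)\top}\hat{\bm{H}}^{(l)}\bm{W}^{(l)}\big)$, and by the cyclic invariance of the trace this equals $\operatorname{tr}\big(\hat{\bm{H}}^{(l)\top}\hat{\bm{H}}^{(l)}\bm{W}^{(l)}\bm{W}^{(l)\top}\big) = \operatorname{tr}\big(\hat{\bm{H}}^{(l)\top}\hat{\bm{H}}^{(l)}\big) = \|\hat{\bm{H}}^{(l)}\|_F^2$, again invoking $\bm{W}^{(l)}\bm{W}^{(l)\top} = \bm{I}$. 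Taking square roots gives the claim.

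For part (3), the first step is to express the back-propagated gradient through the linear map. Since $\bm{H}^{(l)} = \hat{\bm{H}}^{(l)}\bm{W}^{(l)}$, the chain rule for matrix derivatives gives $\frac{\partial\mathcal{L}}{\partial\hat{\bm{H}}^{(l)}} = \frac{\partial\mathcal{L}}{\partial\bm{H}^{(l)}}\,\bm{W}^{(l)\top}$ (this is exactly the transformation already used in the proof of Theorem~\ref{theorem_1}). Then the Frobenius-norm computation is identical in structure to part (2): $\big\|\frac{\partial\mathcal{L}}{\partial\hat{\bm{H}}^{(l)}}\big\|_F^2 = \operatorname{tr}\!\big(\bm{W}^{(l)}\,\frac{\partial\mathcal{L}}{\partial\bm{H}^{(l)}}{}^{\top}\frac{\partial\mathcal{L}}{\partial\bm{H}^{(l)}}\,\bm{W}^{(l)\top}\big)$, and cyclic invariance together with $\bm{W}^{(l)\top}\bm{W}^{(l)} = \bm{I}$ collapses this to $\big\|\frac{\partial\mathcal{L}}{\partial\bm{H}^{(l)}}\big\|_F^2$.

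None of the three steps involves a genuine obstacle; the only point requiring care is being explicit about the convention relating the row-vector embeddings $\bm{h}^{(l)}$ in the matrix $\bm{H}^{(l)} = \hat{\bm{H}}^{(l)}\bm{W}^{(l)}$ to the vector identity $\bm{h}^{(l)} = \bm{W}^{(l)\top}\hat{\bm{h}}^{(l)}$ used in part (1), and similarly being careful that the matrix chain rule in part (3) produces right-multiplication by $\bm{W}^{(l)\top}$ (not $\bm{W}^{(l)}$). With those conventions fixed, every step is a one-line trace manipulation, so the ``hard part'' is purely bookkeeping rather than mathematical.
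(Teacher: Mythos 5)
Your proposal is correct and follows essentially the same route as the paper's Appendix 2 proof: linearity of expectation plus the covariance transformation rule for part (1), and orthogonality-based norm preservation for parts (2) and (3). In fact your version is slightly more careful than the paper's, since you phrase the Frobenius-norm steps via $\|\bm{M}\|_F^2=\operatorname{tr}(\bm{M}^\top\bm{M})$ and cyclic invariance and keep the row-vector convention $\bm{H}^{(l)}=\hat{\bm{H}}^{(l)}\bm{W}^{(l)}$ consistent, whereas the paper's appendix omits the trace and silently switches the side on which $\bm{W}^{(l)}$ acts.
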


\begin{proof}
(1) It's easy to calculate:
\begin{equation}
\begin{aligned}
    \mathbb{E}_{\bm{h}^{(l)}}[\bm{h}^{(l)}] & =\mathbb{E}_{\hat{\bm{h}}^{(l)}}[\bm{W}^{(l)}\hat{\bm{h}}^{(l)}] \\
    & =\bm{W}^{(l)}\mathbb{E}_{\hat{\bm{h}}^{(l)}}[\hat{\bm{h}}^{(l)}] \\
    & = \bm{W}^{(l)} \cdot 0\\ 
    & =\mathbf{0}.
\end{aligned}
\end{equation}

The covariance of $\bm{h}^{(l)}$ is calculated as follows:

\begin{equation}
    \begin{aligned}
    \operatorname{cov}(\bm{h}^{(l)}) &=\mathbb{E}_{\bm{h}^{(l)}}\left[\bm{h}^{(l)}-\mathbb{E}_{\bm{h}^{(l)}}[\bm{h}^{(l)}]\right]^{2} \\
    &=\mathbb{E}_{\hat{\bm{h}}^{(l)}}\left[\bm{W}^{(l)}\left(\hat{\bm{h}}^{(l)}-\mathbb{E}_{\hat{\bm{h}}^{(l)}}[\hat{\bm{h}}^{(l)}]\right)\right]^{2} \\
    &= {\bm{W}^{(l)}}^{2} \mathbb{E}_{\hat{\bm{h}}^{(l)}}\left[\hat{\bm{h}}^{(l)}-\mathbb{E}_{\hat{\bm{h}}^{(l)}}[\hat{\bm{h}}^{(l)}]\right]^{2}\\
    &= \bm{W}^{(l)} {\bm{W}^{(l)}}^\top \operatorname{cov}(\hat{\bm{h}}^{(l)})\\
    &=\sigma^{2}\bm{I}.
    \end{aligned}
\end{equation}

(2) Given that $\bm{W}^{(l)} {\bm{W}^{(l)}}^{\top}=\bm{I}$, and $\bm{W}^{(l)}$ is a  square orthogonal matrix, we have
\begin{equation}
\begin{split}
  \|\bm{H}^{(l)}\|_F 
  & = \sqrt{{\bm{H}^{(l)}}^\top \bm{H}^{(l)}} \\
  & =\sqrt{{\hat{\bm{H}^{(l)}}}^{\top} \bm{W}^{(l)} {\bm{W}^{(l)}}^{\top} \hat{\bm{H}}^{(l)}} \\
  & = \sqrt{{\hat{\bm{H}^{(l)}}}^{\top} \bm{I} \hat{\bm{H}}^{(l)}} \\
  & =\|\hat{\bm{H}}^{(l)}\|_F.  
\end{split}
\end{equation}

(3) We have a proof similar to proof of (2):

\begin{equation}
    \begin{aligned}
    \left\|\frac{\partial \mathcal{L}}{\partial \hat{\bm{h}}^{(l)}}\right\|_F
    & =\left\|\frac{\partial \mathcal{L}}{\partial \bm{h}^{(l)}} \bm{W}^{(l)}\right\| \\
    & =\sqrt{\frac{\partial \mathcal{L}}{\partial \bm{h}^{(l)}} \bm{W}^{(l)} {\bm{W}^{(l)}}^{\top} \frac{\partial \mathcal{L}^\top}{\partial \bm{h}^{(l)}}} \\
    & =\sqrt{\frac{\partial \mathcal{L}}{\partial \bm{h}^{(l)}} \bm{I} \frac{\partial \mathcal{L}^\top}{\partial \bm{h}^{(l)}}} \\
    & =\left\|\frac{\partial \mathcal{L}}{\partial \bm{h}^{(l)}}\right\|_F.
    \end{aligned}
\end{equation}

\end{proof}

\subsection{Appendix 3: The Statistics of Datasets}

The statistics of the datasets used in the node classification and graph classification tasks are summarized in Table~\ref{tab:dataset1} and Table~\ref{tab:dataset2}, respectively.

\begin{table}[h]
  \centering
  \small
  \caption{Summary of datasets used in our node classification task.}
  \label{tab:dataset1}
  \setlength{\tabcolsep}{1mm}{
  \begin{tabular}{ccrrr}
    \toprule
    Dataset &Classes &Nodes &Edges &Features\\
    \midrule
    Cora & 7 & 2,708& 5,429 &1,433\\
    Citeseer & 6 & 3,327& 4,732 &3,703\\
    Pubmed & 3& 19,717 &44,338 &500\\
    Cornell & 5 & 183 & 295 & 1,703 \\
    Texas & 5 & 183 & 309& 1,703\\
    Wisconsin & 5 &251 &499& 1,703\\
    ogbn-arxiv &40 &169, 343 & 1, 166, 243 & 128\\
    \bottomrule
  \end{tabular}}
\end{table}

\begin{table}[h]
  \centering
  \small
  \caption{Summary of datasets used in our graph classification task.}
  \label{tab:dataset2}
  \setlength{\tabcolsep}{1mm}{
  \begin{tabular}{ccrrr}
    \toprule
    Dataset &Graphs &Nodes (max) &Nodes (avg) & Classes\\  
    \midrule
    D\&D & 1178 & 5748& 284.32 &2\\
    PROTEINS & 1113 & 620& 39.06 & 2\\
    \bottomrule
  \end{tabular}}
\end{table}

\subsection{Appendix 4: Implementations}
We implement Ortho-GConv and reproduce the baselines using Pytorch~\cite{paszke2017automatic} and Pytorch Geometric~\cite{fey2019fast}. To provide a rigorous and fair comparison between the different models on each dataset, we use the same dataset splits and training procedures. For the semi and full-supervised node classification and graph classification tasks, we calculate the average test accuracy of 10, 5 and 10 runs, respectively. We adopt the Adam optimizer for model training. For the sake of reproducibility, the seeds of random numbers are set to the same.
We perform random hyper-parameter searches, and report the case that achieves the best accuracy on the validation set of each benchmark. 
For Ortho-GConv, we tune the following hyper-parameters: $T \in\{1,2,3,4,5,6\}$, $\beta \in\{0.1,0.2,...,1\}$, $\lambda \in\{1e-4, 5e-4\}$.

\subsection{Appendix 5: Results of Semi-supervised Node Classification}

For the semi-supervised node classification task, we conduct experiments on 3 well-known benchmark datasets: Cora~\cite{sen2008collective}, CiteSeer~\cite{sen2008collective} and PubMed~\cite{sen2008collective}. 

We conduct two experiments to evaluate the performance of Ortho-GConv for semi-supervised node classification task, namely the comparison with different layers and comparison with SOTA. We follow the standard experimental setting proposed by \cite{split}. 

\begin{table*}[htp]
  \centering
  \small
  \caption{Testing mean accuracy (\%) comparisons on different backbones with/without Ortho-GConv on semi-supervised node classification tasks.}
  \label{tab:semi_overall_backbone}
    \centering
    \begin{tabular}{ll|cccccc}
    \toprule
    \multirow{2}{*}{Dataset}  & \multirow{2}{*}{Backbone} & \multicolumn{2}{c}{2 layers} & \multicolumn{2}{c}{4 layers} & \multicolumn{2}{c}{8 layers} \\
      && \multicolumn{1}{c}{original~} & \multicolumn{1}{c}{Ortho-GConv} & \multicolumn{1}{c}{original~} & \multicolumn{1}{c}{Ortho-GConv} & \multicolumn{1}{c}{original~} & \multicolumn{1}{c}{Ortho-GConv}  \\
    \hline
    \multirow{3}{*}{Cora} & GCN& 81.52$\pm$0.71  & {\bf83.65$\pm$0.52}& 78.12$\pm$1.67& {\bf82.66$\pm$0.68}  & 63.11$\pm$7.47& {\bf79.08$\pm$0.63}\\
    & JKNet & 81.87$\pm$0.75& {\bf82.32$\pm$0.37 }& 80.59$\pm$1.13& {\bf82.06$\pm$0.34}&79.50$\pm$1.29  & {\bf81.98$\pm$0.31}\\
      & GCNII & 82.20$\pm$0.60  &{\bf83.63$\pm$0.56}  & 83.00$\pm$0.65& {\bf84.00$\pm$0.56}& 84.60$\pm$0.40  & {\bf84.62$\pm$0.37}  \\
    \hline 
    \multirow{3}{*}{Citeseer} & GCN& 71.15$\pm$1.49  & {\bf71.53$\pm$0.61}& 62.15$\pm$1.68& {\bf71.24$\pm$0.70}  & 53.76$\pm$5.45& {\bf68.59$\pm$1.74}  \\
      & JKNet & 70.38$\pm$1.22&{\bf 70.81$\pm$ 0.46}  & 70.47$\pm$0.45&{\bf70.78$\pm$0.74} & 70.30$\pm$1.09  &{\bf70.87$\pm$0.35}  \\
    & GCNII &68.20$\pm$0.85& {\bf71.65$\pm$0.46}  & 68.90$\pm$0.64& {\bf72.36$\pm$0.50} & 72.10$\pm$0.88  & {\bf 72.58$\pm$0.51}\\
    \hline
    \multirow{3}{*}{Pubmed}& GCN& 79.02$\pm$5.40  & {\bf 79.78$\pm$0.38} & 54.92$\pm$5.42 & {\bf79.78$\pm$0.42}  &  51.33$\pm$6.84 &{\bf 78.32$\pm$0.65}\\
      & JKNet &78.62$\pm$0.18&{\bf79.22$\pm$ 0.35}  & 77.85$\pm$1.28& {\bf79.10$\pm$0.27} &75.27$\pm$1.29  & {\bf79.78$\pm$0.35}\\
    & GCNII &78.20$\pm$0.81& {\bf79.17 $\pm$0.18} & 78.10$\pm$0.90&{\bf79.45$\pm$0.29}  & 78.70$\pm$1.03  &{\bf 79.89$\pm$0.38} \\
    \hline
    \bottomrule
    \end{tabular}
\end{table*}

\begin{table*}[h]
  \centering
  \small
  \caption{Accuracy (\%) comparisons with SOTA on full-supervised tasks. The highest performances are in bold. The number in parentheses denotes the number of layers.}
  \label{tab:full_sota2}
    \begin{tabular}{ccccccc}
    \toprule
    Method&  Cora  &Cite.  &Pumb.&Corn.  &Texa.  &Wisc.\cr
    \midrule
    GCN&85.77  &73.68  &87.91  &52.70  &55.14  &49.02  \cr
    GAT &86.37  &74.32  &87.62  &54.32  &58.38  &49.41 \cr
    Geom-GCN&85.19  &{\bf77.99}  &90.05&56.76  &57.58  &58.24  \cr
    APPNP &87.87  &76.53  &89.40  &73.51  &65.41  &69.02 \cr
    JKNet&85.25(16)  &75.85(8)  &88.94(64)&57.30(4)  &56.49(32) &48.82(8)\cr
    Incep(Drop)&86.86(8)&76.83(64)&89.18(4)&61.62(16)&57.84(8)&50.20(8)  \cr
    GCNII&88.49(64)&77.08(64)&89.78(8)&74.86(16)&71.46(16)&75.30(8)  \cr
    Ortho-GCNII&{\bf 88.81}(4)&77.26(2)&{\bf90.30(2)}&{\bf76.22}(2)&{\bf77.84}(2)&{\bf77.25}(4)  \cr
    \bottomrule
    \end{tabular}
\end{table*}

\noindent \textbf{(1) Comparison with Different Layers.} We adopt three backbones - GCN~\cite{kipf2016semi}, JKNet~\cite{xu2018representation} and GCNII\cite{chen2020simple} on Cora, Citeseer and Pubmed, with depths ranging from 2 to 8. 
For all the models, we train them until they undergo convergence and then report the best test results.
The final results of Ortho-GConv at the 2/4/8 layers compared with the original models are reported in Table~\ref{tab:semi_overall_backbone}, which takes accuracy as an evaluation metric.
After analyzing the results, we make the following conclusions:
\begin{table}[h]
\small
\centering
  \caption{Accuracy (\%) comparisons with SOTA on semi-supervised tasks. The highest performances are in bold. The number in parentheses denotes the number of layers.}
  \label{tab:semi_sota}
    \setlength{\tabcolsep}{1mm}{
    \begin{tabular}{cccc}
    \toprule
    Method&  Cora  &Citeseer  &Pubmed  \cr
    \midrule
    GCN&81.5  &71.1  &79.0  \cr
    GAT &83.1  &70.8  &78.5\cr
    APPNP &83.3  &71.8  &80.1  \cr
    JKNet&81.9(2)  &69.8(16)  &78.1(32)  \cr
    JKNet(Drop)&83.3(4)  &72.6(16)  &79.2(32)  \cr
    Incep(Drop)&83.5(64)&72.7(4)&79.5(4)\cr
    SGC &81.7&71.7&78.9\cr
    DAGNN &84.4&73.3&{\bf80.5}\cr
    GCNII&85.5$\pm$ 0.5 (64)&73.4$\pm$0.6(32)&80.2$\pm$0.4(32)\cr
    Ortho-GCNII &{\bf85.6$\pm$ 0.4}(64)&{\bf74.2$\pm$ 0.4}(32)&80.1$\pm$0.3 (16)\cr
    \bottomrule
    \end{tabular}}
\end{table}

\romannumeral1) Ortho-GConv achieves better performance than the original models. It suggests that the accuracy can be improved by applying our proposed Ortho-GConv.
For example, Ortho-GConv obtains average improvements of 1.6\% over the backbones with 2 layers on Cora , while achieving considerable increase of 9.3\% in the case of 8 layers.

\romannumeral2) 
As the number of layers increases, the performances of the three backbones without applying Ortho-GConv decrease significantly, while the backbones equip with Ortho-GConv consistently maintain higher experimental performances. This demonstrates that the orthogonal weight constraint could tackle the performance degrading problem of shallow GNNs, through stabilize the forward inference and backward gradient propagation as analyzed in the previous section.

\noindent \textbf{(2) Comparison with SOTA.} In order to verify the overall performance of Ortho-GConv, we choose the best performance from each backbone with Ortho-GConv and compare with the current popular SOTA approaches, including three shallow models (GCN~\cite{kipf2016semi}, GAT~\cite{GAT} and APPNP~\cite{APPNP}), DropEdge \cite{rong2019dropedge}, SGC \cite{wu2019simplifying} and DAGNN \cite{liu2020towards}. 
The comparison results are provided in Table~\ref{tab:semi_sota}.
After analyzing the results, we make the following conclusions:

Clearly, the experimental results demonstrate that Ortho-GConv obtains an effective enhancement against SOTA methods. 
We can observe that Ortho-GConv gains 0.2\% average improvement on all of datasets. Particularly, Ortho-GConv delivers 1.1\% improvement over GCNII on Citeseer, which indicates that Ortho-GConv has a remarkable boost considering the three perspectives of Ortho-GConv on GNN models.

\subsection{Appendix 6: Results of Full-supervised Node Classification}
Table~\ref{tab:full_sota2} summarizes the highest performances of the different approaches by screening the model depths. It is observed that Ortho-GCNII, a model implementing our Ortho-GConv over GCNII backbone, generally delivers the competitive accuracy on all the considered datasets. This validate the effectiveness of Ortho-GConv to further boost model performance, which could be applied to explore the powerful GNNs in the future.

\end{document}